\theoremstyle{plain}
\newtheorem{prop}{Proposition}
\newtheorem{thm}{Theorem}
\newtheorem{example}{Example}
\theoremstyle{definition}
\newtheorem{defn}{Definition}
\theoremstyle{remark}
\newtheorem{rem}{Remark}
\newcommand{\Arond}{\mathcal A}
\newcommand{\Brond}{\mathcal B}
\newcommand{\Hrond}{\mathcal H}
\newcommand{\Irond}{\mathcal I}
\newcommand{\Lrond}{\mathcal L}
\newcommand{\Mrond}{\mathcal M}
\newcommand{\Prond}{\mathcal P}
\newcommand{\Rrond}{\mathcal R}
\newcommand{\vertiii}[1]{{\left\vert\kern-0.25ex\left\vert\kern-0.25ex\left\vert #1 
    \right\vert\kern-0.25ex\right\vert\kern-0.25ex\right\vert}}
\newcommand{\set}[1]{\left\{ #1\right\}}
\renewcommand{\leq}{\leqslant}
\newcommand{\Prob}{\mathbb{P}}
\newcommand{\Yspace}{\mathsf{Y}}
\newcommand{\Xspace}{\mathsf{X}}
\newcommand{\Zspace}{\mathsf{Z}}
\newcommand{\Emb}{Z} 
\newcommand{\EmbSpace}{\mathsf{Z}} 
\newcommand{\TV}{\text{TV}}
\newcommand{\lora}{LoRA}
\newcommand{\imSpace}{\textrm{Im}}
\newcommand{\rk}{\textrm{rank}}
\newcommand{\dist}{\textrm{d}}
\newcommand{\queryProj}{\textrm{Q}}
\newcommand{\valueProj}{\textrm{V}}
\newcommand{\lenincludes}{\mathop{\widetilde{\subset}}}
\title{Statistical Deficiency for Task Inclusion Estimation}
\begin{document}

\author{
 \textbf{Loïc Fosse\textsuperscript{1,2}},
 \textbf{Frédéric Béchet \textsuperscript{2,4}},
 \textbf{Benoît Favre\textsuperscript{2,3}},
 \textbf{Géraldine Damnati\textsuperscript{1}},
\\
 \textbf{Gwénolé Lecorvé\textsuperscript{1}},
 \textbf{Maxime Darrin\textsuperscript{4,5,6,8}},
 \textbf{Philippe Formont\textsuperscript{4,6,7,8}},
 \textbf{Pablo Piantanida\textsuperscript{4,6,8,9}}
\\
 \textsuperscript{1}Orange Research, Lannion, France,
 \textsuperscript{2}CNRS, LIS, Aix Marseille Université, France,
 \\
 \textsuperscript{3}CNRS, LIG, Univ. Grenoble Alpes, Grenoble, France,
 \\
 \textsuperscript{4}International Laboratory on Learning Systems (ILLS - IRL CNRS), Montréal,
 \\
 \textsuperscript{5}McGill University,
 \textsuperscript{6}Mila - Quebec AI Institute,
 \textsuperscript{7}{\'E}TS Montréal,
 \\
 \textsuperscript{8}Université Paris-Saclay,
 \textsuperscript{9}CNRS, CentraleSupélec.
\\
   {\bf \small Contact}: {\small \tt loic.fosse@orange.com}
}

\tikzstyle{fleche} = [draw, thick, color=black, -latex']
\tikzstyle{dfleche} = [draw, thick, color=black, latex'-latex']
\tikzstyle{flecheDashed} = [draw, dashed, color=black, -latex']
\tikzstyle{DflecheDashed} = [draw, dashed, color=black, latex'-latex']

\maketitle
\begin{abstract}
    Tasks are central in machine learning, as they are the most natural objects to assess the capabilities of current models. The trend is to build general models able to address any task. Even though transfer learning and multitask learning try to leverage the underlying task space, no well-founded tools are available to study its structure. This study proposes a theoretically grounded setup to define the notion of task and to compute the {\bf inclusion} between two tasks from a statistical deficiency point of view. We propose a tractable proxy as information sufficiency to estimate the degree of inclusion between tasks, show its soundness on synthetic data, and use it to reconstruct empirically the classic NLP pipeline.\footnote{Code and data are available \href{https://gitlab.lis-lab.fr/talep-public/acl2025}{here}}
\end{abstract}

\section{Introduction}
\label{sec:intro}

Having a well-defined set of tasks with known or assumed dependency relationships has historically been a key element in building or evaluating Natural Language Processing~(NLP) systems.
For instance, Named Entity Recognition (NER) and summarization are two well-established tasks for which annotated datasets exist, and it is commonly accepted that the summarization task, at least in the news domain, requires NER skills to be performed effectively.
As a consequence, studying generated summaries from the perspective of retained named entities is a relevant evaluation angle~\cite{pagnoni-etal-2021-understanding,berezin2023named,akani2023reducing}.
According to this principle, a more general hypothesis is that multi-task training~\cite{caruana1997multitask} provides cross-task generalization~\cite{naturalinstructions,ye-2024-cross,wang2024generalizationvsmemorizationtracing,baxterModelInductiveBias2000,wuUnderstandingImprovingInformation2020} since tasks share dependencies.
However, with the rise of instruct-tuned models~\cite{wei-arxiv-21}, tasks can be directly defined by prompting large language models.
This dramatically increase of the addressable tasks' space makes the notions of ground truth and labeled dataset more fuzzy, and raises many questions about the capabilities of these seemingly all-powerful models. Notably, it is unclear how many tasks a single model can correctly handle, how many parameters are necessary to capture a given task, and what proportion of the model is dedicated to language understanding, task solving, and memorization.

In this paper, to advance the understanding of the notion of task and its manipulation within language models, we are interested in studying intrinsic relationships between tasks. Building on the intuition that some tasks are necessary conditions to others ({\it e.g.} NER is a necessary condition for summarization), we propose a framework to discover statistical \textit{task inclusion relationships}.
Potential application to the discovery of such relations is to build smaller, more compute-efficient datasets~\cite{zamir-ieee-18} and, more generally, to build better data-mix when training models~\cite{ye2024datamixinglawsoptimizing}, or more orthogonal benchmarks.
Our main approach will rely on statistical simulation methods~\cite{camSufficiencyApproximateSufficiency1964,lecamComparisonExperimentsaShort1996} to decide whether one task can be transformed into another.
While we theoretically show the shortcomings of naively measuring cross-task performance by directly applying each model to each other task, the contributions of the paper are:
\begin{itemize}[leftmargin=*]
    \setlength\itemsep{0em}
    \item {\bf A theoretical framework for task definition and inclusion.} Based on information concepts and theory, we propose a clear definition of a task and candidate notions of inclusion (independent of the notion of model).
    \item {\bf An experimental setup for task comparison.} We propose a tractable proxy to measure task inclusion through statistical reductions.
    \item {\bf An empirical rediscovery of the NLP pipeline} Experiments suggest that our framework reconstructs the expected partial order for a sample of linguistic tasks from the classical NLP pipeline.
\end{itemize}

\section{Related work}
\label{sec:related_work}

Discovering the task space underlying structure~\cite{turing1950computing,winograd1987thinking} is a common problem in machine learning (ML), to compare human and machine representations of tasks but more generally to leverage potential structure for more efficient learning~\cite{li2025optimal}.
We list the main families of task comparison methods and discuss how our new one differs from them.

\paragraph{Task similarity.} Evaluating similarity between tasks is one of the main area of interest, as it is the crux to discovering transfer learning or meta-learning opportunities~\cite{schmidhuber-phd-87, zhou-pmlr-21}. It can be performed by comparing the same model trained on various tasks~\cite{achille-ieee-19,shui-arxiv-19}, by comparing the data distributions of different tasks~\cite{ethayarajh-icml-22}. Unlike work on task similarity, we aim at finding a non-symmetrical notion of task inclusion. We discuss in~\autoref{app:task-vectors} limitations of similarity based approaches and why the proposed setup is more suited for task relationship study.

\paragraph{Task merging.} Inspired from ensemble methods~\cite{dietterich-iwmcs-00}, model merging focuses on combining several existing models to create a new one. Recent methods either use arithmetic operations~\cite{ilharcoEditingModelsTask2023,taoTaskArithmeticLens2024,ortiz-jimenezTaskArithmeticTangent,zhouMetaGPTMergingLarge2024,zeng2025efficient} or more complex aggregation methods~\cite{yadavTIESMergingResolvingInterference2023a,jinDatalessKnowledgeFusion2023,yangAdaMergingAdaptiveModel2023}, with the goal of solving conflicts or interferences between models~\cite{yu2020gradient,sener2018multi} and thus tasks. While task and model merging focuses on the parameter space (whose dimension is excessively large), the tools developed here focus on the activation space. However, we propose in~\autoref{app:task-vectors} an analysis of parameter space based approaches and we show some limitations, despite some interesting behaviors.

\paragraph{Task Transfer.} Transfer learning~\cite{torrey-igi-10,hanneke-arxiv-24,lange-etal-2021-share} consists in leveraging a model pre-trained for a new task for a given task, either as an initialization point for further training or to generate useful representations. Although most of the time one uses a generic pre-trained model and trains directly for the new task, \citet{vu-arxiv-20} showed that some tasks might benefit from training on an intermediate task, effectively building a path of (easily) transferable tasks.
In computer vision, this phenomenon has been studied and quantified by \citet{baoInformationTheoreticApproachTransferability2019}. \citet{zamir-ieee-18}, obtained similar results showing connections between various visual tasks and were able to leverage these structures to optimize training of multitask models~\cite{zhang-ieee-21}. Knowledge transferability between different tasks is also at the heart of modern ML and generalization as exhibited by models such as T5~\cite{wei-arxiv-21,khashabi-arxiv-20}, or more recently instruct-type models~\cite{zhang2023instruction} with multi-task training leading to significantly stronger results. Task transfer is mainly based on successive fine-tuning processes, as well as on the study of the fine-tuned models parameters geometry (Fisher information). The tools developed here focus on fine-tuned model activations enabling us to avoid certain learning costs and connect with powerful theoretical results.

\paragraph{Probing.} Understanding what is encoded in a  model has been a question of interest which led to probing methods. It consists in assessing the activations of a model on a downstream task~\cite{guillaume-iclr-17,chen-arxiv-21,rogers-tacl-21,wallat-ecir-23,nikolaev-emnlp-23,zhao-acm-24,waldisHolmesBenchmarkAssess2024a}. 
Some studies assess encoder-type models with probing methods to understand what the model encodes after fine-tuning on a task~\cite{durraniHowTransferLearning2021,merchantWhatHappensBERT2020,mosbachInterplayFinetuningSentenceLevel2020a}, in order to understand how tasks impact pre-trained models. The main observation is that deepest layers will focus more on the task, while first layers remain generic. This result is confirmed in~\cite{durraniTransformationLatentSpace2022a} with the use of clustering methods. However, probing has some drawbacks as discussed in~\cite{pimentel-arxiv-20,kunz-iccl-20}, where it is clearly explained that it can lead to miss-interpretation. Discussion about probing interpretation is provided in~\autoref{app:information-theory:ce-decomp}. Moreover, the last layer interpretation does not seem to hold for decoder-only generative language models (which we are using here)~\cite{gromov2024unreasonable}, even when fine-tuned on a task ({\it c.f.}~\autoref{sec:results}).

\section{Theoretical framework}
\label{sec:formalism}
\label{formalism}

We introduce here the theoretical set-up we will be using, which is mainly a probabilistic one. After defining a task (\autoref{def:task}), we will start by proposing a strict probabilistic definition of task inclusion, and a relaxed definition (\autoref{def:relaxed-inclusion}) that states that a task is included in another one if the estimation process of the latter provides sufficient information on the former. Then we propose a way of computing this crucial notion of informativeness through statistical deficiency (\autoref{def:deficiency}). Beyond this theoretically grounded approach, we finally propose a tractable way to estimate deficiency through information sufficiency. The approach is developed in the following subsections and can be synthesized in~\autoref{fig:task-comm-comp}.
\paragraph{Notations.} We denote by $\Prond(\Xspace)$ the set of all probability measures on $\Xspace$. For any random variable $X\in\Xspace$, we'll denote by $\Prob_X \in \Prond(\Xspace)$ the associated push-forward measure. Given two probability measures $P$ and $Q$ on the same space, we denote by $\|P-Q\|_{\TV}$, the total variation distance. Given some spaces $\Xspace$ and $\Yspace$, we denote by $\Mrond(\Yspace | \Xspace)$ the set of all Markov Kernels from $\Xspace$ to $\Yspace$, which can be seen (under certain assumptions) as the set of all conditional probability measures $\Prob_{Y|X}$. Given $K \in \Mrond(\Zspace | \Yspace)$ and $M \in \Mrond(\Yspace | \Xspace)$, we denote by $K \circ M \in \Mrond(\Zspace | \Xspace)$ the composition of kernels\footnote{We give further details about the formalism in~\autoref{app:formalism:details}.}.

\subsection{Task and inclusion}\label{subsec:task-inclusion}

\begin{defn}[Task]\label{def:task}
    Given input data $X \in \Xspace$ and response $Y \in \Yspace$, a task is the joint probability measure $\Prob_{XY} \in \Prond(\Xspace\times\Yspace)$. 
\end{defn}
\begin{rem}\label{rem:task-def-notes}
    \autoref{def:task} provides a simple generic definition of tasks in a ML context, which is (at least implicitly) adopted in many works~\cite{maurer2016benefit,baxterModelInductiveBias2000}.  
\end{rem}
Given~\autoref{def:task}, many cases can appear while comparing tasks, yielding different interpretations. Two tasks can be considered on different input marginal distributions $\Prob_X$, which is known in the literature as the {\it domain shift}~\cite{wortsmanRobustFinetuningZeroshot2022,taoriMeasuringRobustnessNatural2020,radfordLearningTransferableVisual2021,kumarFineTuningCanDistort2022}. Tasks comparison in that case will lead to interpretation about tasks' domain. Our goal is different. We seek to compare tasks in terms of skills, which refers to the conditional measure $\Prob_{Y|X}$, {\it i.e.} the skills to estimate $Y$ given $X$. To clarify, we make the following assumptions:
\begin{itemize}[leftmargin=*]
    \setlength\itemsep{0em}
    \item (H1) All our tasks are probability measures on the same space $(\Xspace \times \Yspace)$ which is true in the generative paradigm where $\Xspace$ and $\Yspace$ are both texts.
    \item (H2) For all tasks, the marginal distribution $\Prob_X$ will always remain the same. This is equivalent to considering that all our tasks are performed on the same input text\footnote{We provide more details about this hypothesis in~\autoref{app:measure-theory:h2}}.
\end{itemize}
Given~\autoref{def:task} and the different hypotheses, solving a task will be considered as the estimation of the conditional probability measure $\Prob_{Y|X}$. Then, we define the inclusion between two tasks as the answer to the following question: 
Given two tasks $\Prob_{XY_U}$ and $\Prob_{XY_V}$, does the estimation of $\Prob_{Y_U|X}$ implies being able to estimate $\Prob_{Y_V|X}$?
This question gives the simplest idea of the inclusion between two tasks: if we can perform one task, we can perform the other one. However, this is too restrictive since the estimation of $\Prob_{Y_U|X}$ can effectively not directly imply having the entire measure $\Prob_{Y_V|X}$. However, having $\Prob_{Y_U|X}$ can give strong hints (information) on the shape of $\Prob_{Y_V|X}$~\cite{boudiafUnifyingMutualInformation2021a} and we would like to capture this situation as an inclusion one\footnote{In~\autoref{app:information-theory:ce-decomp} we detail why we can have this situation}. This is the reason why we define a relaxed version of the inclusion,
\begin{defn}[Lenient-inclusion]\label{def:relaxed-inclusion}
    Given two tasks $\Prob_{XY_U}$ and $\Prob_{XY_V}$, we say that $\Prob_{XY_V}$ is included into $\Prob_{XY_U}$ (denoted as $\Prob_{XY_V} \lenincludes \Prob_{XY_U}$), iff the estimation of $\Prob_{Y_U|X}$ is informative about $\Prob_{Y_V | X}$.
\end{defn}
In~\autoref{def:relaxed-inclusion}, the notion of informativeness depends on the context. The goal in the following will be to define different versions of the informativeness of one task about another.
\autoref{def:relaxed-inclusion} seems more simple and requires less constraints on the shapes of $\Prob_{Y_U|X}$ and $\Prob_{Y_V|X}$ and we'll stick to this definition. Moreover, we can refer to an intensity of the inclusion which refers to how informative one task is about another. Still, to find inclusion, we must be able to manipulate very complex probability measures which are here our tasks\footnote{In~\autoref{app:measure-theory} arguments are developed, with the only use of measure theory.}. Thus, a meaningful and tractable representation of tasks is needed to address the inclusion estimation. Such a representation can be obtained by looking at a model fine-tuned to solve the task. In fact, it has been shown in~\cite{boudiafUnifyingMutualInformation2021a,achilleEmergenceInvarianceDisentanglement2018,tishbyInformationBottleneckMethod2000} that models fine-tuned to solve a task produce sufficient statistics of the task, in Fisher's sense~\cite[Definition 3.2 p.43]{KEENER}\footnote{For more details about this we refer to~\autoref{app:information-theory:model-suff}}. In the case of current language models, these sufficient statistics are essentially the continuous representations (or embeddings) $\Emb \in \EmbSpace$ of text $X$ produced by the models. These embeddings will thus be used as proxies to estimate task inclusion. In the following, for sake of simplicity we will refer to $\Prob_{XY_U}$ and $\Prob_{XY_V}$ as respectively task $U$ and task $V$.

\begin{figure}
    \centering
    \includegraphics{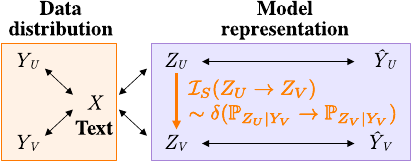}
    \caption{Illustration of proposed task comparison framework. $X$ is textual input, $Y$ is reference output, $\hat{Y}$ is system output, $Z$ represents embeddings, ($U$, $V$) is a pair of tasks. $\delta()$ is statistical deficiency and $\Irond_S$ is the information sufficiency proxy.}
    \label{fig:task-comm-comp}
\end{figure}

\subsection{Deficiency: a notion of inclusion}

\autoref{def:relaxed-inclusion} states that $V \lenincludes U$ if and only if solving the task $U$ is informative about the task $V$. One way to verify this, is by comparing embeddings' distributions of models fine-tuned on respectively $U$ ($\Emb_U$) and $V$ ($\Emb_V$). In the following, we will use the notation $\Prob_{\Emb_U | Y_U} \in \Mrond(\EmbSpace | \Yspace)$. This kernel describes the distribution of $\Emb_U$ given the values $Y_U$ takes. A good kernel $\Prob_{\Emb_U | Y_U}$ would cluster the embeddings $\Emb_U$ depending on the value $Y_U$ can take (similar values of $Y_U$ lead to similar $\Emb_U$), assuring we can infer $Y_U$ from $\Emb_U$\footnote{We refer to~\autoref{app:formalism:kernel-emb} for more details.}. Then, following the path of inclusion of $V$ into $U$, a question of interest would be: how good is the kernel $\Prob_{\Emb_U | Y_V}$? Or equivalently: how clustered $\Emb_U$ is relatively to $Y_V$? Statistical deficiency, first introduced by~\citet{blackwellComparisonExperiments1951,blackwellEquivalentComparisonsExperiments1953} in the context of comparison of statistical experiments, provides a set up to compare $\Prob_{\Emb_U | Y_V}$ and $\Prob_{\Emb_V|Y_V}$ (the latter being considered good for task $V$), providing one possible answer to previous questions.
\begin{defn}[Deficiency~\cite{camSufficiencyApproximateSufficiency1964}]\label{def:deficiency}
    Let $\Prob_{XY_U}$ and $\Prob_{XY_V}$ be two tasks, and $\Emb_U$ and $\Emb_V$ the embeddings of $X$ given by fine-tuned models. The deficiency $ \delta(\Prob_{\Emb_U | Y_V} \rightarrow \Prob_{\Emb_V | Y_V}) $ measures the informativeness of $\Prob_{Z_U | Y_V}$ about $\Prob_{Z_V|Y_V}$ and is defined as:
    \[
        \begin{split}
        \delta(&\Prob_{\Emb_U | Y_V} \rightarrow \Prob_{\Emb_V | Y_V} ) \\
        &\triangleq  \inf_{M\in\Mrond(\EmbSpace | \EmbSpace)}  \| M\!\circ\!\Prob_{\Emb_U | Y_V} - \Prob_{\Emb_V | Y_V} \|_{\TV}. \\
        \end{split}
    \]
    If $\delta(\Prob_{\Emb_U | Y_V} \rightarrow \Prob_{\Emb_V | Y_V})=0$ (no deficiency), we say that $\Prob_{\Emb_U | Y_V}$ is {\bf sufficient} for $\Prob_{\Emb_V | Y_V}$. Deficiency is a quantity in $[0,1]$ which quantifies how informative $\Prob_{\Emb_U | Y_V}$ is about $\Prob_{\Emb_V|Y_V}$ ($0$ being perfectly informative).
\end{defn}

\begin{thm}[$0$-deficiency]\label{thm:info-process}
     \[
        \delta(\Prob_{\Emb_U | Y_V} \rightarrow \Prob_{\Emb_V | Y_V} ) = 0 \Rightarrow V\lenincludes U.
    \]
\end{thm}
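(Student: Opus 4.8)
The plan is to read the conclusion directly off the definitions: the hypothesis $\delta(\Prob_{\Emb_U | Y_V} \rightarrow \Prob_{\Emb_V | Y_V}) = 0$ is, verbatim, the statement that $\Prob_{\Emb_U | Y_V}$ is \emph{sufficient} for $\Prob_{\Emb_V | Y_V}$ in Le~Cam's sense, and the work is to translate ``sufficiency of the embedding experiment'' into the informativeness required by \autoref{def:relaxed-inclusion}. First I would set up the experiment-theoretic reading. Treating $Y_V$ as a parameter ranging over $\Yspace$, the families $\mathcal{E}_U \triangleq (\EmbSpace, \{\Prob_{\Emb_U|Y_V=y}\}_{y\in\Yspace})$ and $\mathcal{E}_V \triangleq (\EmbSpace, \{\Prob_{\Emb_V|Y_V=y}\}_{y\in\Yspace})$ are statistical experiments, and $\delta(\Prob_{\Emb_U | Y_V} \rightarrow \Prob_{\Emb_V | Y_V}) = 0$ says that the Le~Cam deficiency of $\mathcal{E}_U$ relative to $\mathcal{E}_V$ vanishes.

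Next I would invoke the randomization criterion (Blackwell--Sherman--Stein / Le~Cam): under the standard-Borel regularity assumptions recorded in the appendix, the infimum defining $\delta$ is attained, so there is $M^\star \in \Mrond(\EmbSpace|\EmbSpace)$ with $M^\star \circ \Prob_{\Emb_U | Y_V} = \Prob_{\Emb_V | Y_V}$. Equivalently, $\mathcal{E}_U$ is Blackwell-at-least-as-informative as $\mathcal{E}_V$: any inference about $Y_V$ extractable from $\Emb_V$ is reproduced, with no loss, from $\Emb_U$ by the fixed post-processing $M^\star$.

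Then I would close the loop with the task picture of \autoref{subsec:task-inclusion}. By the cited sufficiency results, a model fine-tuned on task $V$ turns $X$ into an embedding $\Emb_V$ that is a sufficient statistic for $V$; under (H2) the marginal $\Prob_X$ is fixed, so estimating $\Prob_{Y_V|X}$ amounts to producing the kernel $\Prob_{\Emb_V|Y_V}$ (with the fixed Bayes inversion). Symmetrically, solving task $U$ hands us the representation $\Emb_U$ of $X$. Composing, from the artefact produced by solving $U$ (namely $\Emb_U$) together with the universal kernel $M^\star$ we recover a random variable whose conditional law given $Y_V$ is exactly $\Prob_{\Emb_V|Y_V}$, i.e.\ we reconstruct the sufficient statistic---hence the solution---of task $V$. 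That is precisely what it means for the estimation of $\Prob_{Y_U|X}$ to be informative about $\Prob_{Y_V|X}$, so $V\lenincludes U$.

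The main obstacle is not a computation but the bridge between the measure-theoretic statement and the (deliberately context-dependent) notion of informativeness in \autoref{def:relaxed-inclusion}: the theorem is really pinning that context down to ``Blackwell-domination of the task's sufficient statistic''. Within that reading, the one genuinely technical point is the attainment of the infimum in the definition of $\delta$, so that we obtain an honest kernel $M^\star$ rather than merely a minimizing sequence $M_n$ with $\|M_n\circ\Prob_{\Emb_U|Y_V} - \Prob_{\Emb_V|Y_V}\|_{\TV}\to 0$; this follows from compactness of $\Mrond(\EmbSpace|\EmbSpace)$ in the weak topology of Markov kernels between standard Borel spaces, which is exactly where the regularity hypotheses flagged in the appendix enter. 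If one prefers to sidestep attainment, I would instead argue directly from the randomization criterion that deficiency $0$ is equivalent to the Blackwell order, and note that a minimizing sequence already certifies that every bounded decision rule for $Y_V$ based on $\Emb_V$ is matched, in the limit, by rules based on $\Emb_U$, which suffices for the qualitative conclusion.
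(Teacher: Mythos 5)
Your argument is essentially the paper's argument, just approached from the other side of the Blackwell--Sherman--Stein equivalence. Both proofs rest on the same two pillars: (i) attainment of the infimum in the definition of $\delta$, giving a genuine kernel $K$ (your $M^\star$) with $K\circ\Prob_{\Emb_U|Y_V}=\Prob_{\Emb_V|Y_V}$, and (ii) the standing assumption that fine-tuning produces Bayes-sufficient embeddings, i.e.\ there is a $T_V\in\Mrond(\Yspace|\EmbSpace)$ with $T_V\circ\Prob_{\Emb_V|X}=\Prob_{Y_V|X}$. You then argue via the \emph{randomization} face of Blackwell's theorem --- simulate $\Emb_V$ from $\Emb_U$ by $M^\star$ and thereby reconstruct the sufficient statistic of $V$ --- while the paper argues via the \emph{risk} face: by data processing $\Rrond_\ell(Y_V,\Emb_U)\leq\Rrond_\ell(Y_V,\Emb_V)$ for every bounded $\ell$, and since $\Emb_V$ already attains the Bayes risk for $Y_V$ so must $\Emb_U$, whence a $T\in\Mrond(\Yspace|\EmbSpace)$ exists with $T\circ\Prob_{\Emb_U|X}=\Prob_{Y_V|X}$. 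The two are logically the same step; the risk formulation the paper uses is slightly tighter in that it makes explicit \emph{why} reproducing the law of $\Emb_V$ given $Y_V$ lets you produce the posterior $\Prob_{Y_V|X}$ --- that is precisely the Bayes-optimality assumption, which your write-up only gestures at (``the cited sufficiency results''). Conversely, you are more careful than the paper about why the infimum is attained (compactness of $\Mrond(\EmbSpace|\EmbSpace)$ over standard Borel spaces), a point the paper simply asserts. Net: same proof, with the risk-vs-randomization framings swapped and a different distribution of rigor between the attainment step and the Bayes-sufficiency step; if you adopt the paper's explicit formulation of the Bayes-risk hypothesis (its Eq.~\eqref{eq:bayesian-asumption}--\eqref{eq:posterior-access}), your final sentence becomes a theorem rather than an interpretation.
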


The proof of this result is given in~\autoref{s:thm1-proof}. Restricting our definition of inclusion to $0$-deficiency pairs of task is too restrictive and can rarely be achieved in practice, due to properties of the $\TV$ distance. We thus study the continuous spectrum of informativeness measured by the deficiency. We leverage additional results due to~\citet{camSufficiencyApproximateSufficiency1964,lecamComparisonExperimentsaShort1996} that control the amount of information a task reveal about the other, function of the deficiency.

\begin{thm}[$\varepsilon$-deficiency~\cite{camSufficiencyApproximateSufficiency1964}]\label{thm:epsilon-def}
    Let $\varepsilon > 0$. Then, $\delta(\Prob_{\Emb_U|Y_V} \rightarrow \Prob_{\Emb_V|Y_V} ) < \varepsilon$ if and only if, for any bounded loss function $\ell$, we have,
    \[
        \Rrond_\ell(Y_V, \Emb_U) - \varepsilon \leq \Rrond_\ell(Y_V, \Emb_V) .
    \]
    Where, $\Rrond_\ell(Y_V, \Emb_U)$ denotes the statistical risk of inferring $Y_V$ from $\Emb_U$ measured with the loss function $\ell$
\end{thm}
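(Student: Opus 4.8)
The plan is to derive this as an essentially formal consequence of the definition of deficiency together with a change-of-variables / pushforward argument on the total variation distance. The key fact to exploit is the variational characterization of total variation: for any two probability measures $P, Q$ on a common space, $\|P - Q\|_{\TV} = \sup_{0 \le g \le 1} \left( \Esp_P[g] - \Esp_Q[g] \right)$, and more generally, for a bounded measurable function $f$ with $\|f\|_\infty \le 1$, one has $\left| \Esp_P[f] - \Esp_Q[f] \right| \le \|P - Q\|_{\TV}$ (up to a factor depending on the normalization convention). Since the statistical risk $\Rrond_\ell(Y_V, Z)$ of inferring $Y_V$ from an embedding $Z$ is, by definition, an infimum over decision kernels of the expected loss, and $\ell$ is bounded, the risk will be a $1$-Lipschitz-type functional of the relevant conditional kernel with respect to TV. I would first normalize $\ell$ (WLOG $\ell$ takes values in $[0,1]$, absorbing a scaling constant into $\varepsilon$) and fix the precise convention for $\Rrond_\ell(Y_V, Z) = \inf_{D \in \Mrond(\cdot \mid \EmbSpace)} \Esp\left[ \ell(Y_V, D(Z)) \right]$, averaging over $\Prob_{Y_V}$ (which is well-defined since by (H2) the marginal is fixed).

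The forward direction ($\delta < \varepsilon \Rightarrow$ risk inequality) is the heart of the argument. Suppose $\delta(\Prob_{Z_U \mid Y_V} \to \Prob_{Z_V \mid Y_V}) < \varepsilon$. Then there is a Markov kernel $M \in \Mrond(\EmbSpace \mid \EmbSpace)$ with $\|M \circ \Prob_{Z_U \mid Y_V} - \Prob_{Z_V \mid Y_V}\|_{\TV} < \varepsilon$, where the TV norm on a family of conditionals indexed by $Y_V$ should be read as the supremum (or $\Prob_{Y_V}$-average — I would use whichever matches the cited Le Cam convention) over the values of $Y_V$. Now take any decision procedure $D$ for inferring $Y_V$ that achieves risk close to $\Rrond_\ell(Y_V, Z_V)$ against the good kernel $\Prob_{Z_V \mid Y_V}$. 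Compose: the procedure $D \circ M$ is a valid procedure acting on $Z_U$. Its risk against $\Prob_{Z_U \mid Y_V}$ differs from the risk of $D$ against $M \circ \Prob_{Z_U \mid Y_V}$ by zero (associativity of kernel composition), and the latter differs from the risk of $D$ against $\Prob_{Z_V \mid Y_V}$ by at most $\varepsilon$, because the loss is bounded by $1$ and the two kernels are $\varepsilon$-close in TV for every (or $\Prob_{Y_V}$-a.e.) value of $Y_V$. Taking the infimum over $D$ on the right and noting $D \circ M$ ranges over a subset of all procedures on $Z_U$, we get $\Rrond_\ell(Y_V, Z_U) \le \Rrond_\ell(Y_V, Z_V) + \varepsilon$, which is the claimed inequality.

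For the converse, I would argue by contraposition (or invoke a minimax/duality statement, which is really where Le Cam's theorem does the work): if $\delta \ge \varepsilon$, then for every kernel $M$ there exists a value $y$ of $Y_V$ at which $M \circ \Prob_{Z_U \mid Y_V=y}$ and $\Prob_{Z_V \mid Y_V=y}$ are TV-separated by at least $\varepsilon$; using the dual characterization of TV one extracts, for each such configuration, a bounded test (hence a loss function $\ell$ built from that test — e.g., a $0$--$1$ type loss on a binary sub-experiment distinguishing $y$ from a competing value) witnessing a risk gap exceeding $\varepsilon$. Making this uniform over $M$ so as to produce a single $\ell$ that works requires a compactness/minimax exchange, and this is the step I expect to be the main obstacle: one needs the set of Markov kernels $\Mrond(\EmbSpace \mid \EmbSpace)$ to be compact in an appropriate (e.g., weak) topology and the map $M \mapsto \sup_\ell (\text{risk gap})$ to be lower semicontinuous, so that a minimax theorem applies. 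This is exactly the content of the results of \citet{camSufficiencyApproximateSufficiency1964,lecamComparisonExperimentsaShort1996} that the theorem statement credits, so in the write-up I would either cite it directly as a black box for the ``only if'' direction or sketch the minimax argument under the standing assumption (standard Borel spaces, bounded losses) that makes it go through, and relegate the measure-theoretic bookkeeping to the appendix referenced earlier (\autoref{app:formalism:details}).
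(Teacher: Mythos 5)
The paper does not actually prove this theorem: it is stated with a citation to Le Cam and treated as an imported result (the only proof in the paper is for \autoref{thm:info-process}, the $0$-deficiency case). So there is no paper proof to compare against; what follows is an assessment of your sketch on its own terms.

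Your forward direction ($\delta < \varepsilon \Rightarrow$ risk inequality) is correct and is the standard data-processing half of Le Cam's randomization criterion: take $M$ nearly achieving the infimum in the deficiency, compose any near-optimal decision rule $D$ for $Z_V$ with $M$ to get a rule $D\circ M$ on $Z_U$, and bound the resulting risk gap by the TV distance between $M\circ\Prob_{Z_U\mid Y_V}$ and $\Prob_{Z_V\mid Y_V}$ using the boundedness of $\ell$. You are right that the normalization of $\ell$ and the TV convention matter at the level of constants: the paper defines $\|\cdot\|_\TV$ as $\sup_b|P(b)-Q(b)|$ and (in the proof of \autoref{thm:info-process}) restricts to $\|\ell\|_\infty \leq 1$, so a $[0,1]$-valued loss gives the inequality with constant exactly $1$, whereas a $[-1,1]$-valued loss would introduce a factor of $2$; your parenthetical caveat about the normalization convention covers this. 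You are also right that the kernel TV norm is the sup over the conditioning variable (the paper defines it this way in \autoref{app:formalism:details}), which makes the pointwise-in-$y$ bound you use valid.

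For the converse, you correctly identify that the substance is a minimax/compactness exchange, and that this is precisely what the Le Cam citation is carrying. Since the paper itself offloads the entire theorem to Le Cam, your plan of citing it as a black box for the hard direction is not a gap relative to what the paper does — it is consistent with it. If you did want to sketch the converse rather than cite it, the cleanest route in this finite-action setting is not to build a single $\ell$ from a TV-separating event directly, but to observe that the inequality holding for all bounded $\ell$ (and in particular all $0$–$1$ losses on all finite decision problems) implies, via Le Cam's theorem on comparison of experiments (which packages exactly the Hahn–Banach / minimax step you describe), that the deficiency is bounded. One small caution on the statement itself rather than your proof: the paper mixes a strict inequality $\delta < \varepsilon$ with a non-strict risk inequality inside an ``if and only if,'' which is not literally an equivalence at the boundary $\delta=\varepsilon$; Le Cam's criterion is usually stated with matching non-strict inequalities, and your write-up should adopt that convention to avoid an edge-case mismatch.
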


\autoref{thm:epsilon-def} guarantees that the lower the deficiency the more $V$ is included into $U$. The deficiency can be seen as the {\it missing} information.

\subsection{Inclusion estimation}

Deficiency proposed in~\autoref{def:deficiency} is intractable in practice due to the complexity of $\TV$ distance~\cite{bhattacharyyaApproximatingTotalVariation2023}. However, Information Sufficiency (IS), originally introduced by~\citet{arimotoInformationtheoreticalConsiderationsEstimation1971} and more recently used in~\cite{xu2020theory,darrin$textttCOSMIC$MutualInformation2024,darrinWhenEmbeddingModel2024}, can be an interesting proxy to estimate deficiency. IS of $Z_U$ relatively to $Z_V$, denoted as $\Irond_S(Z_U \rightarrow Z_V)$, is a lower bound of the Mutual Information (MI)~\cite[Section 2.3 p.20]{Cover91} $I(Z_U; Z_V)$ between embeddings, and is defined as,
\begin{equation}
    \begin{split}\label{eq:info_suf_simple}
        \Irond_S(Z_U \rightarrow Z_V) \triangleq \hat{h}(Z_V) - \hat{h}(Z_V | Z_U),
    \end{split}
\end{equation}
where $\hat{h}$ denotes an estimation of the entropy and the conditional entropy\footnote{For more details about IS, we refer to~\autoref{app:formalism:info-suff-details}}. Moreover, MI is a quantity that has links to Lenient inclusion of~\autoref{def:relaxed-inclusion} thanks to the following relations\footnote{We provide a proof of these relations in~\autoref{app:information-theory:im-inclusion}},
\begin{equation}
    \begin{split}
        I(Z_U ; Z_V) &\leq I(Z_U; Y_V), \\
        I(Z_U; Z_V) &\leq I(Z_V; Y_U), \\
    \end{split}
\end{equation}
where $I(Z_U; Y_V)$ can be seen as the \say{information} embeddings trained on $U$ have on the response of $V$ which is related to~\autoref{def:relaxed-inclusion}. Then in terms of interpretation, the larger $\Irond_S(Z_U \rightarrow Z_V)$ is, the more $Z_U$ is informative about $Z_V$ thus the more information $U$ carries about $V$, then the more included $V$ is into $U$. Moreover, IS has been empirically tested as an interesting proxy to evaluate deficiency between embeddings~\cite{darrinWhenEmbeddingModel2024}.
Lastly, we recall that IS is an unbounded positive value, ranging from $0$ to $+\infty$ contrary to deficiency which lies in $[0,1]$\footnote{For more details, we refer to~\autoref{app:formalism:info-suff-details}.}.
In the following, we will use $\Irond_S(Z_U \rightarrow Z_V)$ as a measure of how much task $V$ is included in task $U$. The higher the value of $\Irond_S$ the more $V$ will be considered as included into $U$. 
The main idea behind deficiency and its estimation through IS is to be able to simulate $Z_V$ from $Z_U$. This has also been explored by~\citet{lange-etal-2021-share} in the context of task relationship, but with the use of the $L_2$ distances between samples of the measures with only a linear transformation allowed. Our setup is more general and allows for broader class of transformations with a reconstruction loss connected to the inclusion notion ({\it c.f.}~\autoref{thm:epsilon-def}).

\section{Experimental setup}
\label{sec:experimental_setup}
Two different settings are proposed here to quantify inclusion between tasks using the estimation of IS ({\it c.f.}~\autoref{eq:info_suf_simple}) between embeddings. We will focus on inferring relations between pairs of tasks $U$ and $V$ using the following reasoning:
\[
    \Irond_S(Z_V \rightarrow Z_U)\leq \Irond_S(Z_U\rightarrow Z_V)  \Rightarrow V \lenincludes U.
\]

\paragraph{Synthetic experiment.} First, we consider a synthetic example to explore and validate the tool of IS as an inclusion metric. Task data is generated from a Hidden Markov Model (HMM)~\cite{baum-jstor-96} over a vocabulary of size $10$, from which we sample sequences of up to $30$ symbols. A standard 1-layer transformer-based language model is trained on these synthetic samples in a next-token prediction fashion. This {\it foundation model} is then fine-tuned on three simple classification tasks for which inclusion relationships are known. Given an input~$X$ made of a sequence $S = [s_1, \dots, s_n]$ generated by the HMM, and two characters $C_1$ and $C_2$ drawn from the vocabulary, the three tasks are:
\begin{itemize}[leftmargin=*]
    \setlength\itemsep{-0.2em}
    \item \texttt{First}$(S, C_1, C_2)$ (noted as {\tt F}) should return $Y=0$ if $C_1$ is the same as the first character of $S$ ($C_1 = s_1)$, $1$ otherwise.
    \item \texttt{Last}$(S, C_1, C_2)$ (noted as {\tt L}) should return $Y=0$ if $C_2$ is the same as the last character of $S$ ($C_2 = s_n)$, $1$ otherwise.
    \item \texttt{First\_or\_Last}$(S, C_1, C_2)$ (noted as {\tt F$\lor$L}) should return $Y=0$ if $C_1$ is the first character of $S$ and $C_2$ is the last ; else $1$ if $C_1$ is the first, $2$ if $C_2$ is the last ; $3$ otherwise.
\end{itemize}
It is straightforward to see that the task {\tt F$\lor$L} includes the others while the converse is not true. For all tasks considered, representation $X$ is only the output of the attention layer\footnote{Last token of the sentence, since the model is causal.}. We generated 11 datasets based on various HMMs with a change in the emission of the Markov chain (leading to 33 models). All presented results are averaged over these different datasets. For further details about the setup of this experiment, we refer to~\autoref{app:training:markov}.

\paragraph{NLP Pipeline.} The second experimental setup serves as a proof of concept, demonstrating that our inclusion measure can be effectively applied to NLP tasks. We selected five classification tasks common in linguistic pipelines, syntactic parsing (SYN), semantic role labeling (SRL), named entity recognition (NER), and coreference resolution (COR), along with a text generation task: summarization (SUM). 
These tasks were chosen because their inclusion relationships can be linguistically defined through annotation schemes. We used the OntoNotes dataset~\cite{onto-notes-dataset}, which provides multi-level linguistic annotations for news documents. SYN is based on the Penn Treebank~\cite{marcus1993building} annotation scheme, SRL on the Penn PropBank scheme, and NER uses eight OntoNotes categories for proper nouns (e.g., event, organization, person). COR annotations link coreferring noun phrases and, in some cases, verb phrases, making it a challenging task. For summarization, since OntoNotes lacks this annotation, we used {\tt GPT 3.5} to generate summaries for each document. 
%
From a linguistic perspective, annotation schemes exhibit direct inclusion relationships between syntactic (SYN) and semantic annotations (SRL). In contrast, the relationship between NER and both SYN and SRL is more interdependent: on one side, NER can leverage SYN and SRL to identify the left boundary of named entity spans; on the other side, NER can help defining noun phrases for SYN and SRL analyses.
COR relies on cues from all other levels. Similarly, summarization requires syntactic and semantic simplification as well as structural compression, making knowledge of linked entities crucial. Our goal is to verify that IS can uncover these relationships.
To standardize task processing, all tasks are reformulated as generative tasks. While summarization is inherently generative, linguistic tasks were adapted to extract relevant patterns under task-specific constraints. In order to simplify tasks and reduce sequence lengths, we restrict some tasks to generating a subset of the annotations (such as finding subjects for SYN), assuming that understanding the whole linguistic phenomenon is required to correctly generate such subset. \autoref{tab:data-desc} provides a detailed description of the reformulation; examples are listed in~\autoref{app:data}. All tasks were applied at the document level. We used 1297, 98, and 97 documents as train, validation and test sets, from the broadcast news and newswire subsets of Ontonotes. We compute performance and IS across tasks on Mistral 7B~\cite{mistral-model}, and Llama 3 8B~\cite{llama-3-model} in their Instruct and Base versions. This choice is based on the relatively good results these models offer for their size, which allows repeated fine-tuning within a reasonable resource budget. These models were fine-tuned using Low Rank Adaptations (\lora)~\cite{lora} of rank 8, a regularization coefficient $\alpha$ of 16 and a learning rate of 4e-5 with a constant learning rate scheduler. This choice of adaptation is based on the low amount of data we have for  training ($\approx$ 1300) and the good properties \lora\ offer in a case of low amount of data~\cite{fu-aaai-23}. Training is run for six epochs with a {\it best evaluation loss selection} strategy at the end of the training procedure. Fine-tunings were performed using the \href{https://huggingface.co/docs/transformers/index}{\tt transformers}~\cite{transformers-library} and \href{https://huggingface.co/docs/peft/index}{\tt peft} libraries from Huggingface.

\begin{table}
    \centering  
    {\small   
    \begin{tabular}{l l}
        \toprule
        {\bf Task} & {\bf Description} \\
        \midrule
        SYN & List all {\tt NP-SBJ} syntagms\\
        SRL & List all predicates {\tt PRED(ARG0,ARG1)} \\
        NER & List all NEs for {\tt each category} \\
        COR & List all coreference chains \\
        SUM         & Summarize \\
        \bottomrule
        \end{tabular}
    }
    \caption{Generative adaptation of NLP pipeline tasks}\label{tab:data-desc}
\end{table}

\section{Results}
\label{sec:results}
\label{sec:res}

\paragraph{Synthetic experiment.} In the context of the experiment on HMM data, we propose a deep analysis of the information sufficiency (IS). We provide on~\autoref{tab:markov-mi} IS results. The first thing that can be noticed is the presence of high mutual information in the diagonal of the table, which is an important sanity check: the most informative task for one task is the task itself. The second thing we can notice is that the following property holds,
\[
\begin{split}
    \Irond_S(\text{\tt F} \rightarrow \text{\tt L}) \leq \Irond_S(\text{\tt F$\lor$L} \rightarrow \text{\tt L}) , \\
    \Irond_S(\text{\tt L} \rightarrow \text{\tt F}) \leq \Irond_S(\text{\tt F$\lor$L} \rightarrow \text{\tt F}). \\    
\end{split}
\]
Which is in line with the relations between our tasks. Moreover, we have,
\[
    \Irond_S(\text{\tt F} \rightarrow \text{\tt F$\lor$L}) \approx \Irond_S(\text{\tt L} \rightarrow \text{\tt F$\lor$L}),
\]
which is an interesting results considering that tasks {\tt F} and {\tt L} carry the same amount of information on {\tt F$\lor$L}. The converse observation can be made,
\[
   \Irond_S(\text{\tt F$\lor$L} \rightarrow \text{\tt F}) \approx \Irond_S(\text{\tt F$\lor$L} \rightarrow \text{\tt L}),  
\]
with a similar justification. These results suggest that IS provides an interesting proxy to estimate information about relations between tasks.

\begin{table}
    \centering
    {\small \begin{tabular}{ c | c c c}
    \toprule
    & {\tt F} & {\tt F$\lor$L} & {\tt L} \\
    \midrule
    {\tt F} & 0.736 & 0.236 & 0.130 \\
    {\tt F$\lor$L} & 0.188 & 0.842 & 0.175 \\
    {\tt L} & 0.123 & 0.223 & 0.715 \\
    \bottomrule
    \end{tabular}}
    \caption{$\Irond_S(\text{row}\rightarrow \text{col})$ on pairs of synthetic tasks.}
    \label{tab:markov-mi}
\end{table}


\begin{figure}[t!]
    \centering
    \includegraphics[width=0.6\linewidth]{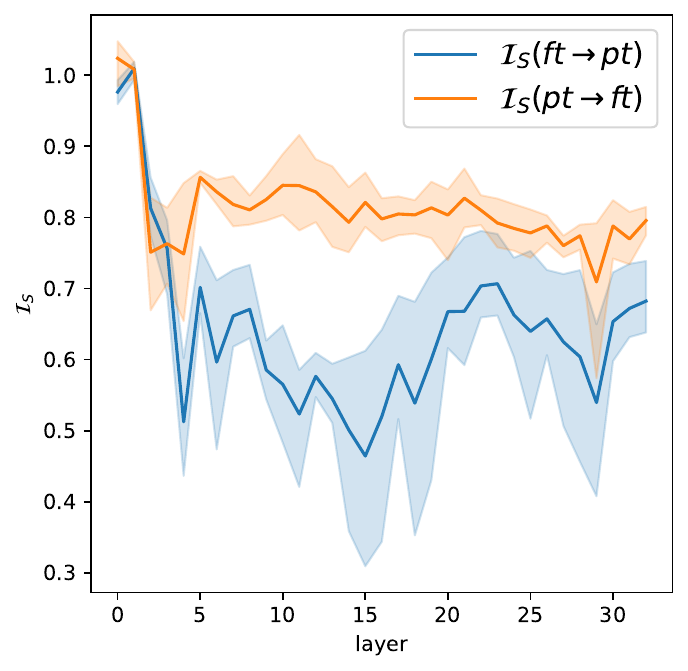}
    \caption{Layerwise information sufficiency between Mistral 7B base and that model model finetuned, averaged over the NLP pipeline tasks.}
    \label{fig:mistral-base-pretrained-vs-finetuned}
\end{figure}

\paragraph{NLP Pipeline.} \autoref{tab:perf-res} presents the results for each model across tasks, evaluated not on direct parser performance but on the generation process using RougeL scores~\cite{lin-2004-rouge} (as described in Table~\ref{tab:data-desc}). As expected, coreference resolution is the most challenging task. For summarization, all models perform similarly in terms of RougeL. Instruct models improve results for Llama but not for Mistral, though these differences are minor given the small test corpus. First, we focus on the strategy for determining IS between pairs of tasks, which can be computed from embeddings at any layer of the models. It has been empirically shown that different layers do not encode the same knowledge in LLMs: middle layers tend to focus more on the task the model is trained to solve, while last (deepest) layers focus more on the generation format of the task~\cite{siddiqui2024deeper,zhang-etal-2024-investigating,fischer2024large}. To support this fact, we ran a simulation for which we compared the hidden representations of the fine-tuned and the pre-trained model. More specifically, we compared $\Irond_S(Z_U \rightarrow Z)$ to $\Irond_S(Z \rightarrow Z_U)$, where $Z$ refers to the embeddings of the pre-trained model. We report results in~\autoref{fig:mistral-base-pretrained-vs-finetuned} for the Mistral Base model (\autoref{fig:layer-selection} contains this figure for all models). 
Middle layers seem to stand out more from the pre-trained model, suggesting that they are the ones that seem to encode the task. This result is in line with recent work about redundancy between large language models' layers~\cite{zhao2024layer,huang2024large,gromov2024unreasonable,men2024shortgpt,gonzalez2025leveraging}. Given this observation, in the following we will only look at average IS over layers 10-15, which seems to be layers for which the gap with the pre-trained model is the most significant. We propose in~\autoref{app:layer-ablation} an ablation study about the use of different layers, where we show the ineffectiveness of deep layers for task modelisation.
In~\autoref{fig:mean-mi}, we plot a heat-map of the average IS across the four models\footnote{Considering the multiple means, it is hard to provide proper confidence intervals.}. First, summarization is probably the most elaborate task considering that IS from any linguistic task to summarization (column SUM) leads to the lowest values. The only informative task for summarization is itself. On the opposite we can see that SYN task seems to be included in every other task (high values in the associated columns). Moreover, we have,
\[
    \begin{split}
        \Irond_S(\text{SYN} \rightarrow \text{SRL}) &\leq \Irond_S(\text{SRL} \rightarrow \text{SYN}) \\ 
        \Irond_S(\text{SRL} \rightarrow \text{NER}) &\leq \Irond_S(\text{NER} \rightarrow \text{SRL}), \\
    \end{split}
\]
leading to the following lenient inclusion ranking, $\text{SYN} \lenincludes \text{SRL} \lenincludes \text{NER}$, which is completely in line with our premises about the linguistic pipeline. Finally, once again, COR task is the most challenging among linguistic tasks, as it seems to contain information about all linguistic tasks, while no other task seems to contain information about it (low values in the associated column). As for its comparison with summarization, we have,
\[
    \Irond_S(\text{COR} \rightarrow \text{SUM}) \leq \Irond_S(\text{SUM} \rightarrow \text{COR}),
\]
which is once again in line with our initial thoughts on the linguistic pipeline. By putting all together these interpretations, clear hints seem to appear about the existence of the linguistic pipeline in the space of tasks (at least in the sense of the Ontonotes annotations mapped to generative tasks).

\begin{table}
\centering
{\small \begin{tabular}{l c@{~}c c@{~}c}
\toprule 
 & \multicolumn{2}{c}{\bf Base} & \multicolumn{2}{c}{\bf Instruct} \\
 & {\bf Llama} & {\bf Mistral} & {\bf Llama} & {\bf Mistral} \\
\midrule
SYN & 97.6 & 97.5 & 97.6 & 97.3 \\
SRL & 81.5 & 80.5 & 82.0 & 81.8 \\
NER & 86.7 & 87.8 & 85.0 & 86.3 \\
COR & 53.9 & 61.2 & 53.7 & 61.7 \\
SUM & 48.8 & 49.6 & 49.6 & 48.5 \\
\bottomrule
\end{tabular}}
\caption{RougeL scores of LLMs on generative versions of NLP pipeline tasks. Rouge implementation comes from \href{https://huggingface.co/docs/evaluate/index}{evaluate} library.}\label{tab:perf-res}
\end{table}

\begin{figure}[t!]
    \centering
    \includegraphics[width=.9\linewidth]{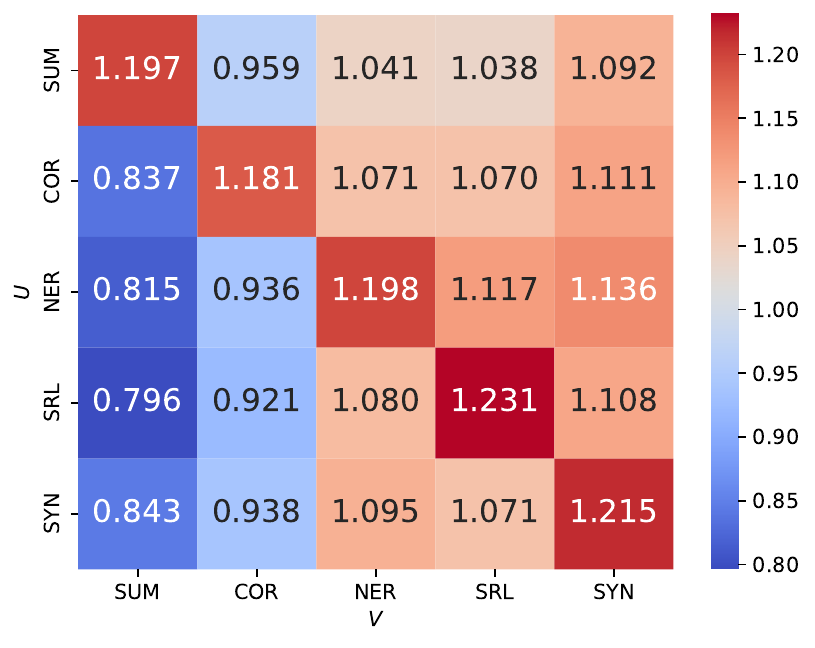}
    \caption{Average of $\Irond_S(\text{row} \rightarrow \text{col})$ across models.}
    \label{fig:mean-mi}
\end{figure}

\paragraph{Predictive power.} Interpretation of IS essentially relies on how much information one task contains about another {\it v.s.} how much information the others contain about the first one. Instead of having local interpretation of every combinations (which tends to increase rapidly) we can sum-up this idea by introducing a quantity we called the {\it predictive power} (PP). PP of a task $U$ can be defined as 
\[
    \text{PP}(U) \triangleq \sum_V \Irond_S(Z_U \rightarrow Z_V) - \Irond(Z_V \rightarrow Z_U).
\]
Interpretation is direct: the higher PP is for a task, the more it contains information about the others while the others do not contain information about it (which is essentially interpretations we made earlier). \autoref{tab:ranking} provides the increasing ranking of the different tasks in terms of predictive power for the different  models (the higher the ranking is the more PP of the task is high). On average (across models), tasks order in terms of PP seems to respect our premises about the NLP pipeline. Additionally, we can see that base models respect the pipeline's order, while some noise is present on Instruct models. This can be contrasted with the fact that Instruct models are pre-trained to perform a wide range of tasks, which can disrupt task modeling and thus ranking~\cite{mueller2024multi}.

\begin{table}[]
    \centering
    {\small 
        \begin{tabular}{l|c|c@{~}cc@{~}c}
        \toprule
              &       & \multicolumn{2}{c}{\bf Llama 3} & \multicolumn{2}{c}{\bf Mistral} \\
              & {\bf Avg.} & {\bf Base} & {\bf Instruct} & {\bf Base} & {\bf Instruct} \\
        \midrule
        SUM & 4.0 & 4 & 4 & 4 & 4 \\
        COR & 3.0 & 3 & 3 & 3 & 3 \\
        NER & 1.5 & 2 & 2 & 2 & 0 \\
        SRL & 0.75 & 1 & 0 & 1 & 1 \\
        SYN & 0.75 & 0 & 1 & 0 & 2 \\
        \bottomrule
        \end{tabular}
    }
    \caption{PP ranking of the tasks for different models. The higher the ranking, the more informative about the others is the task. Avg. refers to the average ranking.}
    \label{tab:ranking}
\end{table}

\section{Discussion}
\label{sec:discussion}

\paragraph{Synthetic experiment.} In~\autoref{tab:markov-mi} while some interesting interpretations can be done, problems remain as we have $
    \Irond_S(\texttt{F$\lor$L} \rightarrow .) \leq \Irond_S(. \rightarrow \texttt{F$\lor$L}),
$
which is not a desired property. However this property can largely be explained by the higher entropy of the hidden states of the task {\tt F$\lor$L} as shown in~\autoref{tab:markov-entropy}. This higher entropy will automatically give higher values of IS of the type $\Irond_S(.\rightarrow \texttt{F$\lor$L})$ which is basically illustrated on~\autoref{tab:markov-mi} (column {\tt F$\lor$L})\footnote{For more details, we refer to~\autoref{eq:info_suf}}. This higher entropy can be explained by the fact that {\tt F$\lor$L} is a 4-class classification problem while the others are 2-class.

\begin{table}
    \centering
    {\small \begin{tabular}{l | c c c}
    \toprule
        & F & F$\lor$L & L \\
    \midrule
         $H$ & 0.656 & 0.761 & 0.634 \\
    \bottomrule
    \end{tabular}}
    \caption{Entropy of the hidden states (synthetic exp.)}
    \label{tab:markov-entropy}
\end{table}

\paragraph{Task inclusion.} First, while we showed that we can mainly uncover the linguistic pipeline, we can observe similarity of our measures for tasks such as SRL and SYN ({\it i.e.} $\Irond_S(\text{SYN} \rightarrow \text{SRL}) \approx \Irond_S(\text{SRL} \rightarrow \text{SYN})$) thus questioning the significance. This result was expected in this case, as the SRL task considered here is closely related to the SYN task. To simplify the tasks and reduce sequence lengths, we restricted SRL and SYN to generating only a subset of annotations (ARG0 and ARG1 for SRL, and subjects and objects of verbs for SYN {\it c.f.}~\autoref{app:data}). As a result, the two tasks are naturally closely aligned, a fact that is further highlighted by our metrics.
Additionally, the setup we address in this work is rather simplistic. First, even among pipeline tasks, it is well known that there is no unidirectional relationship between tasks. For example, although semantic analysis is supposed to rely on syntactic analysis in linguistic pipelines, some syntactic constructs such as prepositional phrase attachment can only be disambiguated by semantic constraints~\cite{brill1994rule}. Although statistical deficiency only allows for strict inclusion, the mutual information proxy represented by IS seems to be more in line with what we know from linguistics. Second, prompts can be written to create arbitrary combination of tasks, which would allow for very diverse, yet controlled, instances of inclusion. Should inclusion resulting from a logical operator be differentiated from inclusion resulting from sequential application of instructions? Finally, the end goal is probably to decompose tasks in a minimal non-overlapping set of skills, a notion which has been eluded in all benchmarking efforts so far\footnote{In~\autoref{app:measure-theory} we formalize task decomposition.}.

\section{Conclusion}

This work aims at characterizing the space of NLP tasks through the notion of inclusion which we define as statistical deficiency. We propose IS as a tractable surrogate which allows comparing tasks from the embeddings of models trained on datasets annotated with those tasks. Experiments on synthetic and real NLP data suggest that this empirical notion of inclusion aligns with our preconception of task processing pipelines, potentially revealing which skills are required to perform some of the more elaborate tasks. 
Future work includes applying this framework to the selection of instruction tuning data by selecting most informative tasks/instructions within the data-mix to optimize dataset sizes without loss of performances. Another interesting line would be the conception of orthogonal evaluation benchmarks. We also plan on exploiting the task space structure for better handling task composition and generalization. A promising direction is to structure tasks as a Partial Ordering Set, that can be derived from a numerical application~\citep{peleg1970utility} and which~\citet{shannon-58} has applied to structure communication channels.

\section{Limitations}

One of the main limitation of this work relies in the use of information sufficiency to estimate task inclusion. As we stated, IS is used as proxy to estimate deficiency. However computing IS, contrary to deficiency, does not account for response values $Y_U$ and $Y_V$. These variables are part of the very essence of a task, as we can see from~\autoref{def:task}. A more accurate way of estimating inclusion would be to directly estimate deficiency which will be addressed in future work. 
Under certain hypothesis, we can use other more tractable distances~\cite{gibbs-02-wiley}, leading to other definitions of the deficiency. The other problem with this approach is that our inclusion estimate is empirical by nature, and relies on how representative the underlying corpus is of the general task distribution. This can be improved by collecting larger corpora but is fundamentally unbounded.

This leads to the second limitation of this work: estimating task inclusion from a single dataset, OntoNotes, in a single language, English. Hypothesis H2 requires that we use the same inputs for the compared tasks, and not many corpora offer this property. The tasks we cover are both limited in number and scope, only addressing a subset of the classic NLP pipeline, and are altered by framing them in a generative setting. In particular, considered NLP pipeline tasks involve only subsets of underlying linguistic structures, which weakens our task ordering claims. Besides, we only look at two average-sized LLMs given the combinatorics of model training/evaluation, and only use a single adaptation method, LoRA. Adaptation through fine-tuneing is one way of specializing models to perform a task, but we should explore zero-shot prompting and in-context learning as well (note that the proposed formalism is still valid in those cases).
Those experiments should be seen as a proof of concept, not a complete proof, to confirm that the intuition of the linguistic pipeline can be rediscovered through the proposed metric. Future work is needed to extend the scope of results.


\newpage
\bibliography{custom,biblio,bibliography/information-theory,bibliography/deficiency,bibliography/books,bibliography/additional_ressources,bibliography/MODEL_MERGE,bibliography/culture,bibliography/fine-tuning-study,bibliography/probing-studies,bibliography/TASKS,bibliography/principal-angles,bibliography/ref-rebutalls-round-1}

\onecolumn
\newpage
\appendix

\newpage

\section{Details about the formalism}
\label{app:formalism}

In this section we give additional details about~\autoref{formalism}.

\subsection{Measures and Kernels}\label{app:formalism:details}

In this study, we assume that all considered spaces are standard Borel~\cite{crauel-02}. Each such space $\Xspace$ is equipped with its Borel $\sigma$-algebra $\Brond(\Xspace)$. Having this regularity about the topology on our spaces is necessary to have equivalence between conditional probabilities and Markov transition kernels~\cite[Theorem 8.5 p.168]{kallenger-springer-22}, assuring the existence of conditional probabilities. Moreover, for every random variable $X\in \Xspace$, we denote by $\Prob_X$ the push-forward measure induced by $X$ on $\Xspace$. Thus considering two random variables $X \in \Xspace$ and $Y \in \Yspace$, we have $\Prob_{Y|X} \in \Mrond(\Yspace | \Xspace)$. We additionally recall here some basic properties on measures and Markov kernels, that are used during this study.
\begin{defn}[Total variation distance]\label{def:tv}
    Given two probability measures $P$ and $Q$ on $(\Xspace, \Brond(\Xspace))$,
    \[
        \|P-Q\|_{\TV} \triangleq \underset{b \in \Brond(\Xspace)}{\sup} | P(b)-Q(b)|.
    \]
\end{defn}
Total variation distance between probability measures induces a distance on Markov Kernels' space, which can be defined as following, for two kernels $M$ and $K$ in $\Mrond(\Yspace | \Xspace)$.
\[
   \|K-M\|_{\TV}=\underset{x \in \Xspace}{\sup} ~ \|K(.|x)-M(.|x) \|_{\TV}.
\]
Deficiency in~\autoref{def:deficiency}, uses a composition operation between Markov kernels, that is defined as following,
\begin{defn}[Markov composition operation]
    Let $K \in \Mrond(\Zspace | \Yspace)$ and $M \in \Mrond(\Yspace | \Xspace)$, 
    \[
        (K \circ M) (b | z) = \int_{\Yspace} K(b | y) M(dy | z)
    \]
    This composition operation must be viewed as a generalization of the law of total probability.
\end{defn}

\subsection{Embeddings as Kernels}\label{app:formalism:kernel-emb}

In this study, $\Prob_{\Emb_U | Y_U} \in \Mrond(\Zspace | \Yspace)$ is used to describe embeddings, especially in~\autoref{def:deficiency}. This kernel describes the shape of $\Emb_U$ given particular values of $y\sim \Prob_{Y_U}$. An interesting case to understand this kernel is classification tasks, for which $\Yspace$ is a finite discrete space. It is a well known fact that for classification tasks, $\Prob_{\Emb_U | Y_U}$ can be described as a cluster-type model. In that case $\Prob_{\Emb_U | Y_U}(.|y)$ corresponds to the cluster of $\Emb_U$ associated to $Y_U = y$. When looking at $\Prob_{\Emb_U | Y_V}$ we seek to understand if $\Emb_U$ has also a clustering-type structure which makes sens for $Y_V$ and then if we can infer values of $Y_V$ from $\Emb_U$, which is echoing the inclusion of $\Prob_{XY_V}$ in $\Prob_{XY_U}$.

\subsection{Proof of \autoref{thm:info-process}}
\label{s:thm1-proof}

We provide here the proof of~\autoref{thm:info-process}. This result, states that a $0$-deficiency can be seen as an inclusion between two tasks. As a recall the relaxed version of the inclusion of $V$ into $U$ states that solving task $U$ is informative to solve task $V$. \autoref{thm:info-process} states that this useful information obtained from one task for the other, is contained in the embeddings.

The proof of~\autoref{thm:info-process} requires another assumption about the fine-tuning operation. More particularly if $\Emb_U$ are the embeddings of a text $X$, provided by a model trained on the task $\Prob_{XY_U}$, then $Z_U$ can achieve the best error rate (Bayes risk) on task $\Prob_{XY_U}$, which is equivalent as,
\begin{equation}\label{eq:bayesian-asumption}
    \begin{split}
        \Rrond_\ell(Y_U, \Emb_U)& \triangleq \underset{d \in \mathcal{M}(\Yspace_U|\EmbSpace_U)}{\inf} \mathbb{E}_{y \sim Y_U} \mathbb{E}_{\hat{y} \sim d\circ \Prob_{\Emb_U|Y_U}(.|y)} \ell(y, \hat{y}) \\
        & = \underset{d \in \mathcal{M}(\Yspace_U|\Xspace)}{\inf} \mathbb{E}_{y \sim Y_U} \mathbb{E}_{\hat{y} \sim d\circ \Prob_{X|Y_U}(.|y)} \ell(y, \hat{y}),\\
    \end{split}
\end{equation}
for any bounded loss function $l$. Second infimum is achieved for the {\it posterior}, $\Prob_{Y_U | X}$, meaning that there exists $T_U \in \Mrond(\Yspace | \EmbSpace)$ such that,
\begin{equation}\label{eq:posterior-access}
    T_U \circ \Prob_{\Emb_U | X} = \Prob_{Y_U | X}.
\end{equation}
\begin{proof}
    If $\delta(\Prob_{\Emb_U|Y_V} \rightarrow \Prob_{\Emb_V|Y_V})=0$, then there exists some $K \in \Mrond(\EmbSpace | \EmbSpace)$, such that $K\circ \Prob_{\Emb_U|Y_V} = \Prob_{\Emb_V|Y_V}$. The statistical risk of the task $\Prob_{XY_V}$ from $Z_V$ being given by,
    \[
      \Rrond_\ell(Y_V, Z_V) = \underset{d \in \mathcal{M}(\Yspace|\EmbSpace)}{\inf} \mathbb{E}_{y \sim Y} \mathbb{E}_{\hat{y} \sim d\circ \Prob_{\Emb_V|Y_V}(.|y)} \ell(y, \hat{y}),
    \]   
    By data-processing~\cite{reid-jmlr-11,raginskyShannonMeetsBlackwell2011}, we thus have,
    \[
        \Rrond_\ell(Y_V, Z_U) \leq \Rrond_\ell(Y_V, Z_V)\quad \forall~\ell~ \text{s.t.}~\|l\|_{\infty} \leq 1.
    \]
    However, $Z_V$ is supposed to reach Bayes risk, implying thus that $Z_U$ also reaches Bayes risk for $Y_V$. Consequently there exists $T \in \Mrond(\Yspace | \EmbSpace)$ such that,
    \[
        T\circ \Prob_{\Emb_U | X} = \Prob_{Y_V | X}.
    \]
    Thus, by solving task $U$, {\it i.e.} by inferring the posterior $\Prob_{Y_U | X}$ through embeddings $Z_U$, we are also able to produce the posterior $\Prob_{Y_V | X}$ of the task $V$, which concludes the proof. 
\end{proof}

\subsection{Information Sufficiency definition}\label{app:formalism:info-suff-details}

As stated in the core of the study, while deficiency is the true measure of the inclusion, it is too complex to estimate forcing us to use information sufficiency. Information sufficiency is a lower bound of the mutual information between embeddings $Z_U$ and $Z_V$. We have,
\[
    \begin{split}
        I(Z_U; Z_V) &= H(Z_U) - H(Z_U | Z_V) \\
        &= H(Z_V) - H(Z_V | Z_U). \\
    \end{split}
\]
While we do not have access to the true distributions, calculus of the above entropy  is impossible. Information sufficiency estimates the distributions of the embeddings, with a maximum $\log$-likelihood estimation, by making the assumption that distributions lie in a parametric family. Information sufficiency has thus the following expression,
\begin{equation}
   \label{eq:info_suf}
   \begin{split}
        \Irond_{S}(Z_U \rightarrow Z_V )  &\triangleq \hat{H}_{\Prond_{\theta}(\EmbSpace)}(Z_V) - \hat{H}_{\Mrond_{\Theta}(\EmbSpace | \EmbSpace)}(Z_V | Z_U) \\
        \Irond_{S}(Z_V \rightarrow Z_U )  &\triangleq \hat{H}_{\Prond_{\theta}(\EmbSpace)}(Z_U) - \hat{H}_{\Mrond_{\Theta}(\EmbSpace | \EmbSpace)}(Z_U | Z_V),
   \end{split}
\end{equation}
where $\hat{H}_{\Prond_{\theta}(\EmbSpace)}$ is the estimation of the entropy on the class $\Prond_{\theta}(\EmbSpace)$. In our case we use KNIFE estimator~\cite{pichlerDifferentialEntropyEstimator2022} to compute this quantity. This estimator chooses Gaussian Mixtures for the parametric family. This choice can be justified by the fact that the set of Gaussian Mixtures is dense for the weak topology (convergence in probability) within the set of Lebesgue-continuous probability measures, assuring we can approximate every continuous probability measure with a Gaussian Mixture. Moreover, Gaussian mixtures have interesting properties in terms of smoothness. It has been shown in~\cite{donohoOneSidedInferenceFunctionals1988} and confirmed in~\cite{pichlerEstimationInformationMeasures2021} that estimating mutual information for distribution with no particular smoothness assumptions (particularly distributions respecting the {\it dense graph condition}) is impossible from a finite sample of the distribution (whatever the size), justifying once again the choice of Gaussian Mixture here.

One of the main justification of the fact that information sufficiency is an interesting proxy for estimating deficiency, lies in the fact that the term $\hat{H}_{\Mrond_{\Theta}(\EmbSpace | \EmbSpace)}(Z_V | Z_U)$ is an estimation of the amount of information, one embedding is carrying about the other and gives an interesting idea how, we can re-construct one embedding from another.

\section{A measure theoretic view of tasks}
\label{app:measure-theory}

In this work, we defined tasks as probability measures $\Prob_{XY}$ on a product space $(\Xspace \times \Yspace)$. In this section we propose interpretations of classical measure theory results to show links with the inclusion proposed in~\autoref{def:relaxed-inclusion}.

\subsection{About domain shift (H2 assumption)}\label{app:measure-theory:h2}

The objective of this study is to compare tasks in terms of the needed skills to solve them\footnote{The skills here refer to the conditional measure $\Prob_{Y|X}$}. To do so, we made the hypothesis H2 about the marginal distributions on the texts of our tasks. First, this assumption is also made in connected works~\cite{baoInformationTheoreticApproachTransferability2019} in particular set-ups this assumptions can be relaxed~\cite{tanOTCETransferabilityMetric2021}. We further justify here this hypothesis, and show that by fixing the marginal $\Prob_X$, when comparing two tasks (not through a fine-tuned model), we compare the skills needed to solve them. This is done through disintegration Theorem~\cite[Theorem 8.5 p.168]{kallenger-springer-22}. Given a task $\Prob_{XY}$, there exists a unique ($\Prob_X$ a-s) $M\in \Mrond(\Yspace |\Xspace)$, such that for every $E \in \Brond(\Xspace \times \Yspace)$,
\[
\begin{split}
    \Prob_{XY}(E) &= (\Prob_{X}\otimes M)~(E) \\
    &= \int_{\Xspace} \int_{\Yspace} \mathbb{1}_E(x,y) (\Prob_{X} \otimes M)~(dx, dy) \\
    &= \int_{\Yspace} \int_{\Xspace} \mathbb{1}_E(x,y) \Prob_{X}(dx) M(dy|x)
\end{split}
\]
Where $\otimes$, only denotes the product operator between measures defines on different measure spaces. In the following we will denote such kernel $M$ as $\Prob_{Y|X}$. This decomposition assures that whatever the task we are considering, we can disentangle the domain and the skills needed to solve the task,
\[
\Prob_{XY} = \underbrace{\Prob_{X}}_{\text{Domain}} ~ \otimes ~ \underbrace{\Prob_{Y|X}}_{\text{Skills}}
\]
Considering this disentanglement, we'll assume that comparison of tasks will not lead to interpretation about the domain. 

\begin{example}[Difference]
    Given two tasks $\Prob_{XY_U}$ and $\Prob_{XY_V}$, and $E \in (\Brond(\Xspace \times \Yspace))$,
    \[
        \Prob_{XY_U}(E) - \Prob_{XY_V}(E) = \int_{\Xspace} \int_{\Yspace} \mathbb{1}_E(x,y) \Prob_{X}(dx)(\Prob_{Y_U|X}(dy|x) - \Prob_{Y_V|X}(dy|x))
    \]
    In the case of same skills, this difference is equal to zero, and the value of the difference is strongly linked to differences in the skills required for tasks
\end{example}

\subsection{Measure theoretic view of inclusion}
\label{app:measure-theory}

In this Section we explore the possibilities to understand the behavior of a task, w.r.t another set of tasks, in a measure theoretic point of view {\it i.e.} by directly comparing tasks without using any proxy. 
A classical result in measure theory is the Lebesgue decomposition Theorem~\cite{brooksLebesgueDecompositionTheorem1971}, which states that one task can be decomposed interestingly by using another task, echoing a notion of shared information (and thus inclusion) between two tasks. The Theorem is the following: given two tasks, $\Prob_{XY_U}$ and $\Prob_{XY_V}$, we have,
\begin{equation}\label{eq:lebesgue-decomp}
    \Prob_{XY_U} = \mu_V + \rho\quad\text{such that}\quad\mu_V << \Prob_{XY_V}~~\text{and}~~\rho\perp\Prob_{XY_V}.
\end{equation}
In the case of such decomposition, $\mu_V$ can be seen of the \say{information} $\Prob_{XY_U}$ encodes on $\Prob_{XY_V}$. By doing such interpretation, we implicitly consider that domination between measures is a sort of inclusion metric. We show below that this interpretation can be true.

Given two tasks $\Prob_{XY_U}$ and $\Prob_{XY_V}$, such that $\Prob_{XY_V} << \Prob_{XY_U}$ then for every $E \in \Brond(\Xspace \times \Yspace)$, there exists a unique (up to a $\Prob_{XY_U}$ null measure space), positive real valued function $f$ (which is called the density), such that,
\begin{equation}\label{eq:domination-expression}
    \begin{split}
        \Prob_{XY_V}(E) &= \int_{\Xspace} \int_{\Yspace} \mathbb{1}_E(x,y) f(x,y) \Prob_{XY_U}(dx,dy) \\
        &= \int_{\Xspace} \int_{\Yspace} \mathbb{1}_E(x,y) f(x,y) \Prob_X(dx) \Prob_{Y_U|X}(dy|x)
        .
    \end{split}
\end{equation}
Having a domination relation, allows us to express the $\Prob_{XY_V}$-measure of every event $E$, with the use of $\Prob_{Y_U|X}$, which is related to~\autoref{def:relaxed-inclusion}, {\it i.e.} having $\Prob_{Y_U|X}$ is informative about $\Prob_{Y_V|X}$. In the situations where $\Prob_{XY_V} << \Prob_{XY_U}$ we can state the $\Prob_{XY_V} \subset \Prob_{XY_U}$.

\begin{rem}[Domination in practice]
    In practice, we do not have access to theoretical probability measures, but we have access to samples from this measure. We therefore do not work on the theoretical measures but on the quantized version of these measures that we denote by $\tilde{\Prob}_{XY_U}$ and $\tilde{\Prob}_{XY_V}$. In that case, we define $S_U$ and $S_V$ the samples from respectively $\Prob_{XY_U}$ and $\Prob_{XY_V}$. In this case a sufficient condition conditions to have $\tilde{\Prob}_{XY_V} << \tilde{\Prob}_{XY_U}$ is that, $S_V \subset S_U$.
\end{rem}

Thus domination is interestingly related to the notion of task inclusion. One can thus ask how to find a similar set up of the one proposed with the information sufficiency in this study, but with using domination. We propose here several definitions to be able to express one task with respect to other ones,

\begin{defn}[Independent tasks]
    Let $\set{\Prob_{XY_i},~i\in\set{1, \dots, N}}$ a finite set of tasks. We say that this set is independent iff,
    \[
        \Prob_{XY_i}\perp\Prob_{XY_j} \quad \forall i \neq j
    \]
\end{defn}

\begin{prop}\label{prop:measure-theory:task-decomp}
    Let $\Prob_{XY}$ a task, and $\set{\Prob_{XY_i},~i\in\set{1, \dots, N}}$ be an independent set of tasks, then we have the following decomposition exists,
    \[
        \Prob_{XY} = \sum_{i=1}^N \mu_i + \rho \quad \text{such that}\quad \mu_i << \Prob_{XY_i},\quad \text{and}\quad \rho \perp \Prob_{XY_i}\quad \forall i
    \]
    Moreover this decomposition is uniquely verified by the different tasks. $\rho$ is the remaining of the decomposition, {\it i.e.} the task we can't explain from our independent set of tasks.
\end{prop}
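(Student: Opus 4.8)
The plan is to iterate the Lebesgue decomposition theorem (already invoked in \autoref{eq:lebesgue-decomp}) finitely many times, peeling off one absolutely-continuous part with respect to each task in the independent set, and then to use independence to prove that the resulting pieces are mutually singular so they assemble into the claimed global decomposition. First I would apply Lebesgue decomposition of $\Prob_{XY}$ relative to $\Prob_{XY_1}$ to write $\Prob_{XY} = \mu_1 + \rho_1$ with $\mu_1 \ll \Prob_{XY_1}$ and $\rho_1 \perp \Prob_{XY_1}$. Then I would decompose $\rho_1$ relative to $\Prob_{XY_2}$, obtaining $\rho_1 = \mu_2 + \rho_2$ with $\mu_2 \ll \Prob_{XY_2}$ and $\rho_2 \perp \Prob_{XY_2}$, and continue inductively, so that after $N$ steps $\Prob_{XY} = \sum_{i=1}^N \mu_i + \rho$ with $\rho := \rho_N \perp \Prob_{XY_N}$ and each $\mu_i \ll \Prob_{XY_i}$.

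The work is then to upgrade the ``leftover'' properties at step $i$ into properties with respect to \emph{all} the tasks. For $\rho$ this means showing $\rho \perp \Prob_{XY_j}$ for every $j$, not just $j=N$. Here I would use that $\rho$ is dominated (as a set function bound) by $\rho_j$ for every $j \le N$ (since each subsequent step only subtracts a nonnegative measure), and $\rho_j \perp \Prob_{XY_j}$ by construction; a measure that is absolutely continuous with respect to something singular to $\Prob_{XY_j}$ is itself singular to $\Prob_{XY_j}$, giving $\rho \perp \Prob_{XY_j}$ for $j < N$ as well. The genuinely delicate point is that I have only guaranteed $\mu_i \ll \Prob_{XY_i}$; I have not used independence yet, and independence is what makes the $\mu_i$ mutually singular and the decomposition canonical. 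Concretely, $\mu_i$ is carved out of $\rho_{i-1}$, which is singular to $\Prob_{XY_1}, \dots, \Prob_{XY_{i-1}}$; combined with $\mu_i \ll \Prob_{XY_i}$ and the hypothesis $\Prob_{XY_i} \perp \Prob_{XY_j}$ for $j \neq i$, one deduces $\mu_i \perp \Prob_{XY_j}$ for all $j \neq i$ and hence $\mu_i \perp \mu_j$ for $i \neq j$.

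For uniqueness I would argue as follows: suppose $\Prob_{XY} = \sum_i \nu_i + \sigma$ is another such decomposition with $\nu_i \ll \Prob_{XY_i}$ and $\sigma \perp \Prob_{XY_i}$ for all $i$. Fix $i$ and test both decompositions on a Hahn-type carrier set $A_i$ on which $\Prob_{XY_i}$ concentrates and off of which all the $\Prob_{XY_j}$ ($j\neq i$) and all the singular parts live; restricting to $A_i$ annihilates every term except $\mu_i$ in the first decomposition and every term except $\nu_i$ in the second (using independence to separate the carriers of the various $\Prob_{XY_j}$), forcing $\mu_i = \nu_i$, and then $\rho = \sigma$ follows by subtraction. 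The main obstacle I anticipate is not any single estimate but bookkeeping: making precise the nested singularity relations ``$\rho_i \perp \Prob_{XY_1}, \dots, \Prob_{XY_i}$ simultaneously'' so that they survive the induction, and using the independence hypothesis exactly where it is needed — to guarantee that the supports of the $\Prob_{XY_i}$ can be taken pairwise disjoint, which is the crux of both the mutual singularity of the $\mu_i$ and the uniqueness claim. Everything else is a routine appeal to the classical Lebesgue decomposition theorem on standard Borel spaces (already assumed in \autoref{app:formalism:details}) together with the elementary fact that $\ll$ and $\perp$ interact transitively in the needed directions.
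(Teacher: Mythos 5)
The paper states Proposition~\ref{prop:measure-theory:task-decomp} without an accompanying proof, so there is no reference argument to compare against; you have essentially supplied the missing one, and it is correct. The iterated Lebesgue decomposition gives existence without using independence at all: the telescoping chain $\rho_N \le \rho_{N-1} \le \cdots \le \rho_1$ together with $\rho_j \perp \Prob_{XY_j}$ (from the $j$-th step) yields $\rho := \rho_N \perp \Prob_{XY_j}$ for every $j$, because $\rho_N \le \rho_j$ implies $\rho_N \ll \rho_j$, and $\ll$ composed with $\perp$ transfers singularity. Independence enters exactly where you flag it, in the uniqueness step. To make your ``Hahn-type carrier set'' precise: for each $i$ intersect the finitely many witness sets furnished by the relations $\Prob_{XY_i}\perp\Prob_{XY_j}$ ($j\neq i$), $\rho\perp\Prob_{XY_i}$ and $\sigma\perp\Prob_{XY_i}$ to obtain a Borel set $A_i$ that carries $\Prob_{XY_i}$ while being null for every $\Prob_{XY_j}$ with $j\neq i$ and for both residuals; restricting either candidate decomposition to $A_i$ then kills all terms but the $i$-th (each $\mu_j \ll \Prob_{XY_j}$ vanishes on $A_i$ for $j\neq i$, and $\mu_i$ is already concentrated on $A_i$), so $\mu_i = \Prob_{XY}\!\restriction_{A_i} = \nu_i$, and $\rho=\sigma$ follows by subtraction. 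Your observation that independence is what makes the decomposition canonical is the right emphasis: if, say, $\Prob_{XY_1}=\Prob_{XY_2}$, the absolutely continuous mass can be split arbitrarily between $\mu_1$ and $\mu_2$, so the hypothesis is not dispensable for the uniqueness clause.
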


\begin{example}[Interpretation of the remaining $\rho$]
    Given two independent tasks $\Prob_{XY_U}$ and $\Prob_{XY_V}$, we construct a new task $\Prob_{XY}$, such that $Y = f(Y_U, Y_V)$, where $f$ is a (possibly randomized) transformation. Accordingly to~\autoref{prop:measure-theory:task-decomp}, we have,
    \[
        \Prob_{XY} = \mu_U + \mu_V + \rho.
    \]
    In this case $\rho$ can be viewed as the contribution of the application $f$.
\end{example}

However, in~\autoref{prop:measure-theory:task-decomp}, the term $\rho$ remains hard to interpret. It would be interesting to have $\rho=0$, meaning that we have a set of task that is {\bf complete} for our target task $\Prob_{XY}$

\begin{defn}[Complete set of tasks]
    \label{def:complet-set}
    We say that an independent set of tasks $\set{\Prob_{XY_i},~i\in\set{1, \dots, N}}$ is complete for a task $\Prob_{XY}$, when,
    \[
        \Prob_{XY} = \sum_{i=1}^N \mu_i \quad \text{such that}\quad \mu_i << \Prob_{XY_i}.
    \]
\end{defn}

\begin{rem}
    Let $\Prob_{XY}$ a task and $T\triangleq\set{\Prob_{XY_i},~i\in\set{1, \dots, N}}$ an independent set of tasks. 
    \[
        T~ \text{is complete for}~ \Prob_{XY} \Rightarrow \Prob_{XY} << \sum_i \Prob_{XY_i}
    \]
\end{rem}

\begin{proof}
    Let $T$ be complete for $\Prob_{XY}$. Thus,
    \[
    \Prob_{XY} = \sum_i \mu_i.
    \]
    Let $E \in \Brond(\Xspace\times\Yspace)$, such that $\sum_i\Prob_{XY_i}(E)=0$. Because we use non-signed measures, this implies $\Prob_{XY_i}(E)=0~\forall i$, and consequently $\mu_i(E)=0~\forall i$, by construction of the $\mu_i$. We thus have the following property,
    \[
    \forall E\in \Brond(\Xspace\times\Yspace),\quad \sum_i\Prob_{XY_i}(E)=0 \Rightarrow \Prob_{XY}(E)=0
    \]
    Which concludes the proof.
\end{proof}

Having a complete set of task for a target task $\Prob_{XY}$ means that given this set of task, we can fully determine this task. Having a complete independent set of task can be seen as having a {\bf basis of the task space}. From the basis some notions can be derived such as the dimension (cardinal of the basis) of the space. 
One can also define the complexity of a task, as the cardinality of the smallest set of independent task which is complete for the task (of course this set should not contain the task itself).

\section{Information theoretic view of fine-tuning}
\label{app:information-theory}

Information theory~\cite{Cover91} provides powerful tools to understand the dynamic of model trainings. We explore here this framework to make connections between learning theory and tasks inclusion. We first recall that given two random variables $Y$ and $Z$, $I(Y;Z)$ refers to the mutual information between these variables and is defined as,
\[
    \begin{split}
        I(Y;Z) &= H(Y) - H(Y|Z) \\
        &= H(Z) - H(Z|Y). \\
    \end{split}
\]

\subsection{Cross entropy decomposition}\label{app:information-theory:ce-decomp}

If we refer to~\autoref{fig:task-comm-comp}, and we suppose that our models are trained using cross entropy loss function $\Hrond$ (which is our case during all this study), then it has been shown~\cite{boudiafUnifyingMutualInformation2021a} that this loss function can be decomposed in two terms,
\[
    \Hrond(Y_U, \hat{Y}_U) = H(Y_U | \Emb_U) + D_{\text{KL}}(Y_U || \hat{Y}_U | \Emb_U).
\]
Since $H(Y_U)$ is a constant during  training, minimizing the cross entropy loss is equivalent to minimizing the following loss function:
\begin{equation}\label{eq:cross-entropy-decomp}
    \Lrond(Y_U, \hat{Y}_U) = \underbrace{- I(Y_U; \Emb_U)}_{\text{Task Info}} + \underbrace{D_{\text{KL}}(Y_U || \hat{Y}_U | \Emb_U)}_{\text{Task alignment}}.
\end{equation}
In this decomposition of the cross-entropy function the first term in mutual information refers to the information the model captures about the task {\it i.e.} the information the model captures to infer the response. The second term refers to the alignment of the estimation with the true labels. When training a model by minimizing the cross entropy we seek to create embeddings $\Emb_U$ of text $X$ that capture as much information about $Y_U$ as possible and that are aligned with the distribution $\Prob_{Y_U}$. Thus when a model is trained on a task $\Prob_{XY_U}$ it might not be able to perform the task $\Prob_{XY_V}$ due to a misalignment (second term) and thus it is not respecting the first task inclusion property as we defined it. However, the quantity $I(Y_V; \Emb_U)$ can be great suggesting that the model captures information about the task.

\begin{example}[Fully informative but miss-aligned]
    Let $\Prob_{XY_U}$ a task, and $Z_U \in \mathbb{R}^d$ the representation of the text $X$ given by a model trained using $\Hrond$. We additionally suppose that $\hat{Y}_U = f(Z_U)$, where $f$ is optimized w.r.t. $Y_U$. Let $\phi$ be a permutation in $\set{1, \dots, d}$. Because $\phi$ is an invertible mapping,
    \[
        I(Y_U; Z_U) = I(Y_U; \phi(Z_U)).
    \]
    However, we have $\hat{Y}^\phi_U = f(\phi(Z_U))$ and thus we have,
    \[
        D_{\text{KL}}(Y_U || \hat{Y}_U | Z_U) \leq D_{\text{KL}}(Y_U || \hat{Y}^{\phi}_U | \phi(Z_U))
    \]
    Thus based on $Z_U$ we can construct $\phi(Z_U)$, which is as informative as $Z_U$ for the task $\Prob_{XY_U}$, while it will perform worse due to a misalignment.
\end{example}

\begin{rem}[Probing interpretation]
    From~\autoref{eq:cross-entropy-decomp}, we can make a remark about current probing methods, which mainly consist of training a linear probe on top of the representation $Z_U$ without modifying them. Thus, when we probe a representation $Z_U$ on the task $\Prob_{XY_V}$, using cross entropy, we only optimize the following quantity
    \[
        D_{\text{KL}}(Y_V || \hat{Y}_V | Z_U),
    \]
    which is not directly linked to the information $Z_U$ captures about the task $\Prob_{XY_V}$. Probing simply answers the question of whether $Z_U$ can be aligned with $Y_U$ using a linear extractor, which can be restrictive. Current interpretation of probing would be correct if one uses sufficiently powerful probe as stated in~\cite{pimentel-arxiv-20}.
\end{rem}

\subsection{Sufficiency of the models}\label{app:information-theory:model-suff}

As we can see in~\autoref{eq:cross-entropy-decomp}, when fine-tuning a model on a task $\Prob_{XY_U}$ with cross entropy loss (which is our case in this study), the model constructs embeddings $\Emb_U$ such that $I(Y_U; \Emb_U)$ is maximized~\cite{boudiafUnifyingMutualInformation2021a}. However, by {\it data processing inequality}~\cite[Chapter 2, Section 8, p.34]{Cover91} we have the following inequality,
\[
    I(Y_U; \Emb_U) \leq I(Y_U; X).
\]
Thus maximizing the mutual information $I(Y_U; \Emb_U)$ is equivalent in approximate $I(Y_U;X)$. Thus the fine-tuning process tends to produce embeddings $\Emb_U$ such that,
\[
    I(Y_U; \Emb_U) \approx I(Y_U; X).
\]
This is equivalent in saying that fine-tuning a model on a task $\Prob_{XY_U}$ produces embeddings that are sufficient statistics (in Fisher's sense) of $X$ for $Y_U$~\cite{achilleEmergenceInvarianceDisentanglement2018}. This justifies the choice of using the embeddings as a proxy to represent the tasks, and proceed to the inclusion calculation.

\subsection{Mutual Information: an interesting proxy}\label{app:information-theory:im-inclusion}

We give additional results that justify the use of the mutual information as a proxy to estimate the task inclusion. \autoref{eq:cross-entropy-decomp} gives an interesting decomposition of the cross-entropy loss function giving the interpretation that the inclusion measure of a task through the study of a model, can be viewed as the estimation of $I(Y_V; \Emb_U)$ (resp. $I(Y_U; \Emb_V)$). However,~\autoref{fig:task-comm-comp} can be reduced to the following Markov chains $\Emb_U \leftrightarrow Y_V \leftrightarrow \Emb_V$, $\Emb_V \leftrightarrow Y_U \leftrightarrow \Emb_U$. These relations are true in the case of single reference tasks ({\it i.e.} the case where for an input data $x\sim\Prob_X$ there is only one single possible answer $y\sim\Prob_Y$), which is our case for any all linguistic tasks defined. Thus by {\it data processing inequality}, we have the following relations,
\[
    \begin{split}
        I(\Emb_U; \Emb_V) \leq I(Y_V; \Emb_U), \\
        I(\Emb_U; \Emb_V) \leq I(Y_U; \Emb_V).
    \end{split}
\]
Thus the information sufficiency as a lower bound of the true mutual information, is a lower bound of $I(Y_V;\Emb_U)$ and $I(Y_U; \Emb_V)$ which can be viewed as inclusion estimation for models fine-tuned using cross-entropy.

\section{Details about the trainings}
\label{app:training}

\subsection{Synthetic experiment}\label{app:training:markov}

As stated in this study we provided an experiment on a synthetic formal language generated from Hidden Markov Models (HMM). We produced 11 datasets following different HMM distributions. To generate different datasets, we fixed the underlying automaton of the HMM and we only changed the emission probabilities. The used HMM are simple ones, described on~\autoref{fig:markov-chain}. Presented results in this study are averaged on the different datasets we used. We provide on~\autoref{tab:markov-training-hparams} the parameters we used for pre-training and fine-tuning on HMM generated data. First of all to check that the pre-training of our transformer based language models has worked we compared the forward likelihood of the HMM to the estimated likelihood of the transformer model. Results are provided on~\autoref{fig:transformer-vs-forward}. We can clearly see that the pre-trained transformer model approximate better the forward distribution compared to a non-trained model.

\begin{figure}
    \centering
    \begin{tikzpicture}[-> , >= stealth' , shorten >=2 pt, auto, line width=0.5 pt, node distance=2cm ]
        \node[circle,draw] (one) {1};
        \node[circle,draw] (two) [right of=one] {2};
        \node[circle,draw] (three) [right of=two] {3};
        \node[circle,draw] (four) [right of=three] {4};
        \node[circle,draw] (five) [right of=four] {5};
        \node[circle,draw] (six) [right of=five] {6};
    
        \node[circle,draw] (pone) [above of=one] {$\mathbb{P}_{1,\mathcal{A}}$};
        \node[circle,draw] (ptwo) [above of=two] {$\mathbb{P}_{2,\mathcal{A}}$};
        \node[circle,draw] (pthree) [above of=three] {$\mathbb{P}_{3,\mathcal{A}}$};
        \node[circle,draw] (pfour) [above of=four] {$\mathbb{P}_{4,\mathcal{A}}$};
        \node[circle,draw] (pfive) [above of=five] {$\mathbb{P}_{5,\mathcal{A}}$};
        \node[circle,draw] (psix) [above of=six] {$\mathbb{P}_{6,\mathcal{A}}$};
    
        \path (one) edge [loop below] node{$0.5$} (one) ;
        \path (two) edge [loop below] node{$0.5$} (two) ;
        \path (three) edge [loop below] node{$0.5$} (three) ;
        \path (four) edge [loop below] node{$0.5$} (four) ;
        \path (five) edge [loop below] node{$0.33$} (five) ;
        \path (six) edge [loop below] node{$1.$} (six) ;
    
        \path (five) edge [bend left=40] node {$0.33$} (one) ;
        \path (one) edge node{$0.5$} (two) ;
        \path (two) edge node{$0.5$} (three) ;
        \path (three) edge node{$0.5$} (four) ;
        \path (four) edge node{$0.5$} (five) ;
        \path (five) edge node{$0.33$} (six) ;
    
        \path (one) edge (pone) ;
        \path (two) edge (ptwo) ;
        \path (three) edge (pthree) ;
        \path (four) edge (pfour) ;
        \path (five) edge (pfive) ;
        \path (six) edge (psix) ;
    \end{tikzpicture}
    \caption{Illustration of the used markov chain for data generation. The quantity $\Prob_{i, \Arond}$ refer to the emission probabilities of each states.}
    \label{fig:markov-chain}
\end{figure}
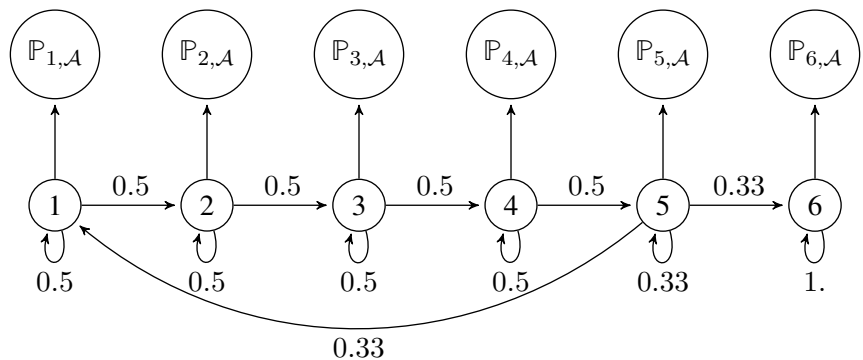

\begin{table}
    \centering
    \begin{tabular}{l | c c}
        \toprule
         & {\bf Pre-training} & {\bf Fine-tuning} \\
         \midrule
         input size & 50 & 50 \\
         hidden size  & 100 & 100 \\
         \# layer & 1 & 1 \\
         \# heads & 1 & 1 \\
         \hline
         lr & 2e-03 & 2e-04 \\
         lr-scheduler & cosine & cosine \\
         \# epochs & 100 & 100 \\
         \# batch & 200 & 200 \\
         \bottomrule
    \end{tabular}
    \caption{Description of the hyper-parameters for the training of transformer based models on described HMM.}
    \label{tab:markov-training-hparams}
\end{table}

\begin{table} 
    \centering
    \begin{tabular}{l c c c}
    \toprule
     & {\bf F1 Micro} & {\bf F1 Macro} & {\bf Acc} \\
    \midrule
    {\tt F} & 0.81 ({\small 0.16}) & 0.78 ({\small 0.20}) & 0.81 ({\small 0.16}) \\
    {\tt F$\lor$L} & 0.61 ({\small 0.20}) & 0.54 ({\small 0.27}) & 0.61 ({\small 0.20}) \\
    {\tt L} & 0.85 ({\small 0.11}) & 0.82 ({\small 0.15}) & 0.85 ({\small 0.11}) \\
    \bottomrule
    \end{tabular}
    \caption{Mean-accuracy (over the different datasets) of the different classification tasks.}
    \label{tab:markov-model-perf}
\end{table}

\begin{figure}
    \centering
    \includegraphics[width=\linewidth]{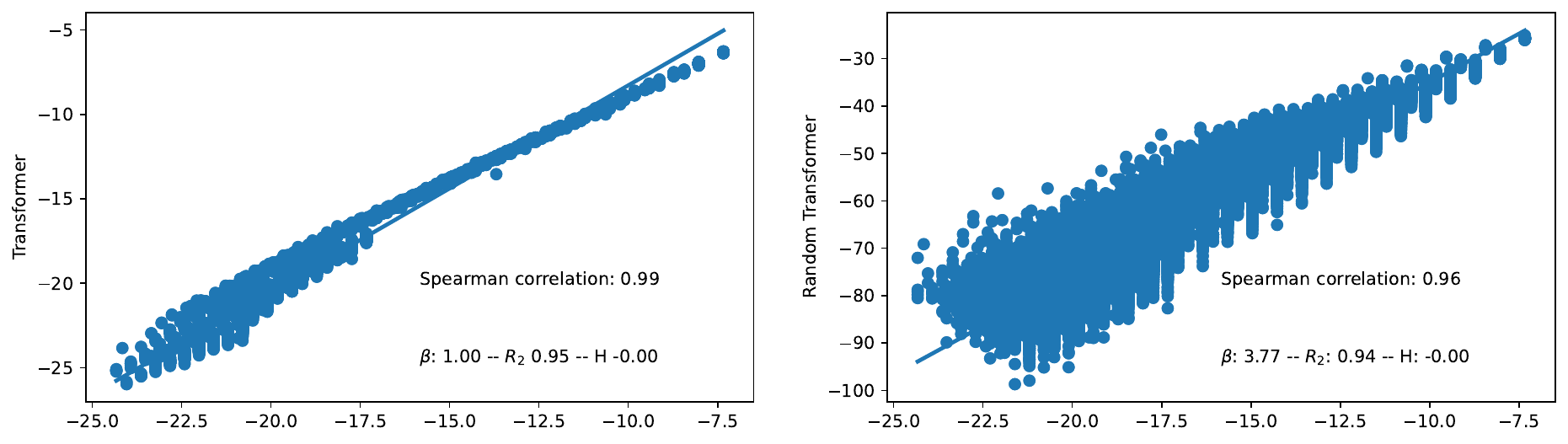}
    \caption{HMM forward likelihood {\it v.s.} empirical likelihood of the transformer based model}
    \label{fig:transformer-vs-forward}
\end{figure}

\subsection{Information Sufficiency estimation}

We provide on~\autoref{tab:knife-hparams} hyper-parameters of the KNIFE estimator. The only change was made for the production of graphics on~\autoref{fig:layer-selection} for which we limited our training on 20 epochs for sake of computational cost efficiency, since the goal of this simulation was mainly to understand the role of each layer.

\begin{table}
    \centering
    \begin{tabular}{l c}
    \toprule
    {\bf Parameter} & {\bf Value} \\
    \midrule   
         \# Marg. Epochs & 100 \\
         \# Cond. Epochs & 100 \\
         Marg. Lr & 0.0001 \\
         Cond. Lr & 0.001\\
         \hline
         \# FF-layers & 2 \\
         \# Marg. Modes (Marginal Gaussian Mixture) &  8 \\
         \# Cond. Modes (Conditional Gaussian Mixture) &  8 \\
         Covariance type & Diagonal \\
    \midrule
    \end{tabular}
    \caption{KNIFE hyper-parameters. Selection of hyper-parameters is based on the extensive study provided in~\cite{darrinWhenEmbeddingModel2024}}
    \label{tab:knife-hparams}
\end{table}

\section{Details about the data}
\label{app:data}

We provide below, an example of the data we used along with the different annotation schemes.

\paragraph{Input.} Voters in strife torn Colombia go to the polls today in local elections in the midst of a wave of violence directed by armed groups of the left and the right against many of the candidates. VOA 's Bill Rodgers has a report from our South American bureau. Some 23 million Colombians are registered to vote Sunday to elect governors, mayors and other local officials in the South American nation, but the election is taking place in the midst of a rising wave of assassinations, kidnappings and threats against the candidates by leftist guerillas and right wing paramilitary groups. Twenty mayoral candidates have been killed and more than 200 kidnapped in recent months. Election authorities say 0 another 200 politicians withdrew their candidacies after being threatened. At the same time, these armed groups are backing reported to have fielded a number of stealth candidates in an effort to expand its control of Colombian territory. Despite the violence, the government of President Andres Pastrana has refused to suspend Sunday's elections vowing 0 they will take place throughout the country and in a democratic atmosphere. Bill Rodgers, VOA News, South America Bureau.

\paragraph{SUM.} Colombia faces violence from armed groups targeting candidates in local elections; government vows to continue voting as planned.

\paragraph{COR.} "strife - torn Colombia" refers also to "the South American nation" and "the government of President Andres Pastrana" and "the country". "local elections" refers also to "the election". "armed groups of the left and the right" refers also to "leftist guerillas and right - wing paramilitary groups" and "these armed groups". "VOA 's" refers also to "our" and "VOA News". "Bill Rodgers" refers also to "VOA 's Bill Rodgers". "South America Bureau" refers also to "VOA 's South American bureau". "Sunday" refers also to "Sunday 's". "the violence" refers also to "a rising wave of assassinations , kidnappings and threats against the candidates by these armed groups".

\paragraph{NER.} This text contains the following entity name : "Colombia" and the following list of dates : "today, Sunday, recent months" and the following list of organisation names : "VOA, VOA News" and the following list of person names: "Bill Rodgers, Andres Pastrana" and the following list of person types: "South American, Colombians, Colombian" and the following list of numbers: "23 million, Twenty, more than 200, 200" and the following facility name: "South America Bureau".

\paragraph{SEM.} have.03("VOA's","a report"); vote.01("Some 23 million","to"); say.01("Election authorities","0"); withdraw.01("another 200","their candidacies"); field.01("these armed groups","a number"); refuse.01("the government","to"); suspend.01("the government","Sunday 's"); vow.01("the government","0").

\paragraph{SYN.} Voters; VOA 's Bill Rodgers; Some 23 million Colombians; the election; Twenty mayoral candidates; more than 200; Election authorities; another 200 politicians; these armed groups; the government; they.

\section{Additional results}
In this section, we provide additional results that were not presented as main results, but which can help on the comprehension of the presented results.

\subsection{Layer selection}\label{app:add-res:layer-selec}

To select layers and ease the estimation of IS, we provided a simulation on which we compared fine-tuned models to pre-trained models in terms of information sufficiency. IS is firstly used in this study as an inclusion metric. However, its first formulation is a lower bound of the mutual information. Thus when estimating the IS from a fine-tuned model to a pre-trained model we estimate the amount of information the model as lost or gain after the fine-tuning operation, compared to the pre-trained model. \autoref{fig:layer-selection} provides results of such comparison for all models present in this study. The first thing we can notice, is that the fine-tuning operation makes the model loose information about the pre-trained model ($\Irond_S(ft \rightarrow pt) \leq \Irond_S(pt \rightarrow ft)$), as if the fine-tuning operation consisted in applying a mask on the pre-trained model. The second observation and it is the one that is used in this study, is that layers where this gap is the biggest is between 10 and 15 suggesting that it is in-between these layers that the fine-tuned model has lost most information about the pre-trained model, suggesting that the task is mostly encoded here, which is the reason why we used these layers.

\begin{figure}
    \centering
    \begin{subfigure}{0.49\textwidth}
        \centering
        \includegraphics[width=0.7\textwidth]{figures/mi/lora/mistral_base_pretrained_vs_finetuned_layer_evolution.pdf}
        \caption{Mistral (Base)}
    \end{subfigure}
    \begin{subfigure}{0.49\textwidth}
        \centering
        \includegraphics[width=0.7\textwidth]{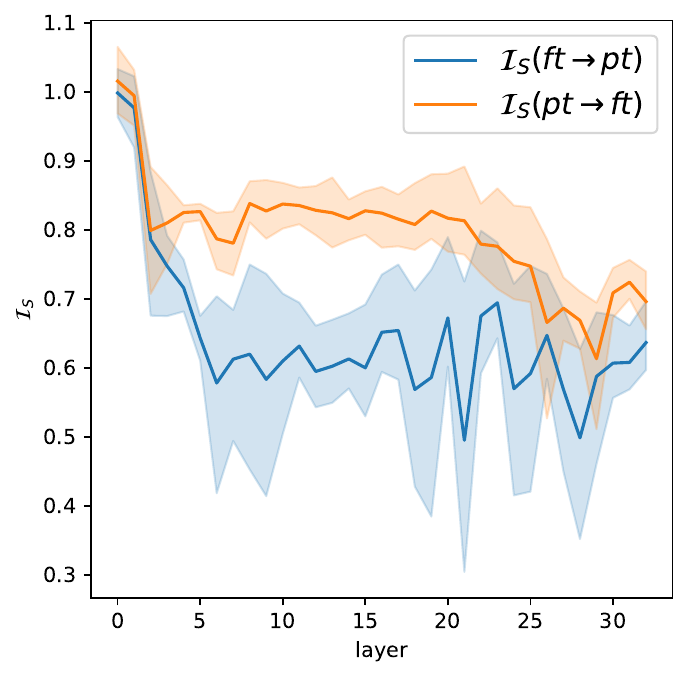}
        \caption{Mistral (Instruct)}
    \end{subfigure}
    \begin{subfigure}{0.49\textwidth}
        \centering
        \includegraphics[width=0.7\textwidth]{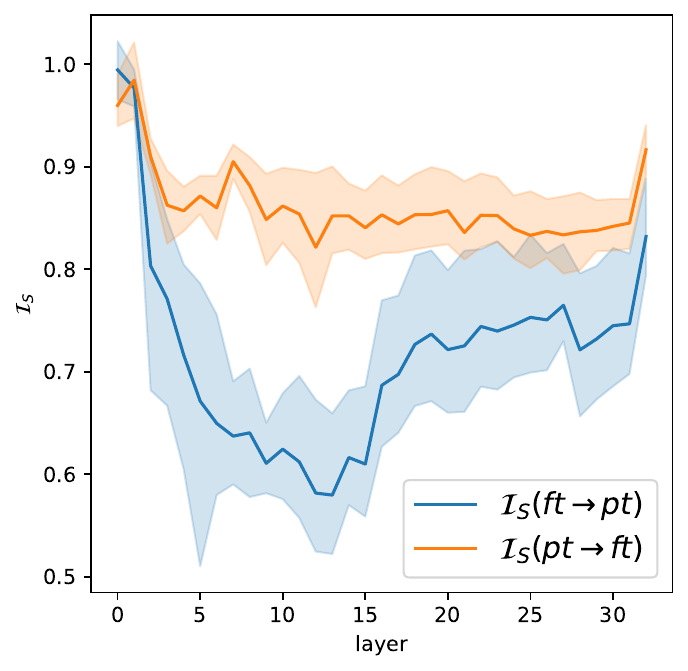}
        \caption{Llama 3 (Base)}
    \end{subfigure}
    \begin{subfigure}{0.49\textwidth}
        \centering
        \includegraphics[width=0.7\textwidth]{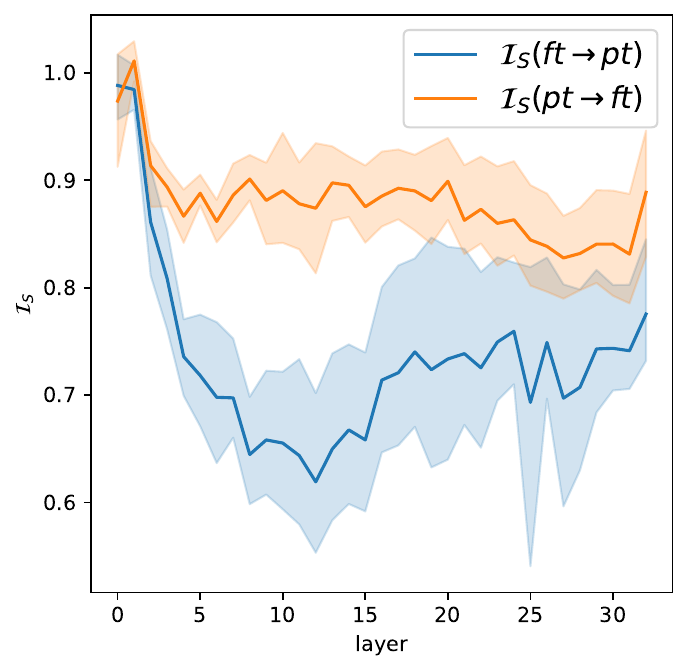}
        \caption{Llama 3 (Instruct)}
    \end{subfigure}
    \caption{Information sufficiency comparison between pre-trained models and each corresponding fine-tuned model.}
    \label{fig:layer-selection}
\end{figure}

\subsection{Information sufficiency matrices}\label{app:additional-figures}

Additionally to~\autoref{fig:mean-mi}, we provide on\autoref{fig:info-suff-10-15} information sufficiency matrices for all the models. As a complement of~\autoref{tab:ranking}, we provide on~\autoref{fig:pp-detail} details of the PP ranking of the different tasks and for the different models. A positive PP for a task means that the task contains the others while the others do not contain this task. We can see that the only tasks with positive PP are summarization and COREF, pointing once again the difficulty of these two tasks and the need for these tasks of various linguistic skills to be performed correctly, which is once again in line with premises about the NLP pipeline.

\begin{figure}
    \centering
    \begin{subfigure}{0.45\textwidth}
        \centering
        \includegraphics[width=\textwidth]{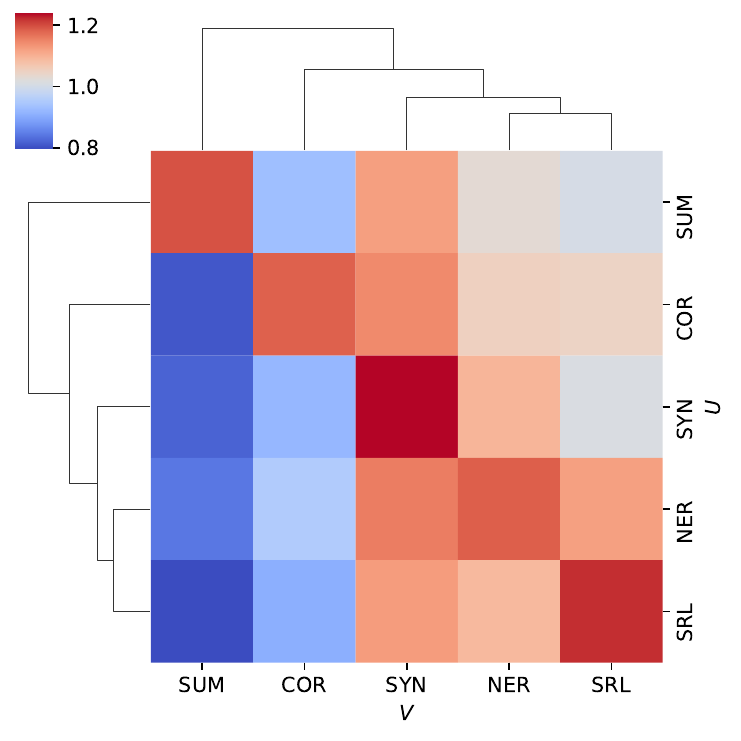}
        \caption{Mistral Base}
    \end{subfigure}
    \begin{subfigure}{0.45\textwidth}
        \centering
        \includegraphics[width=\textwidth]{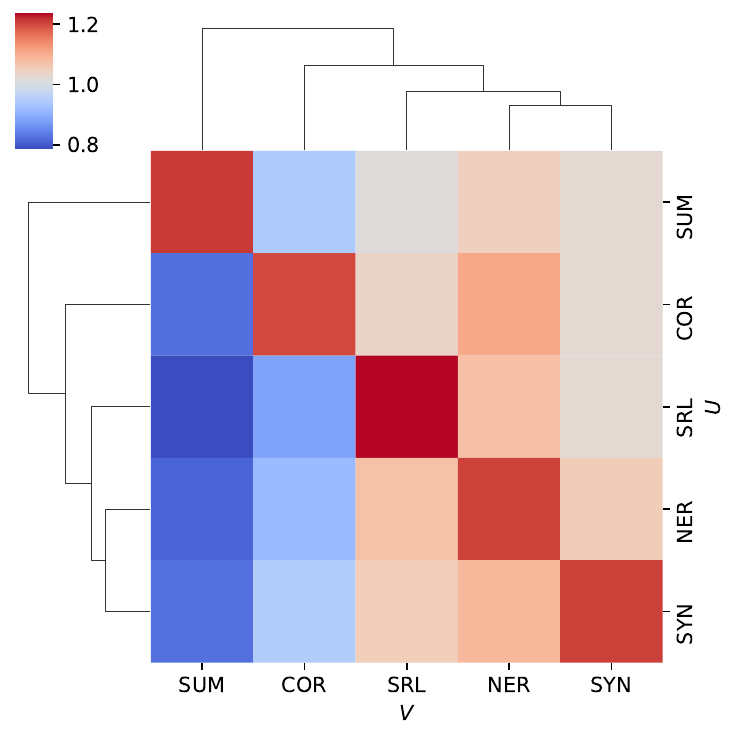}
        \caption{Mistral Instruct}
    \end{subfigure}
    \begin{subfigure}{0.45\textwidth}
        \centering
        \includegraphics[width=\textwidth]{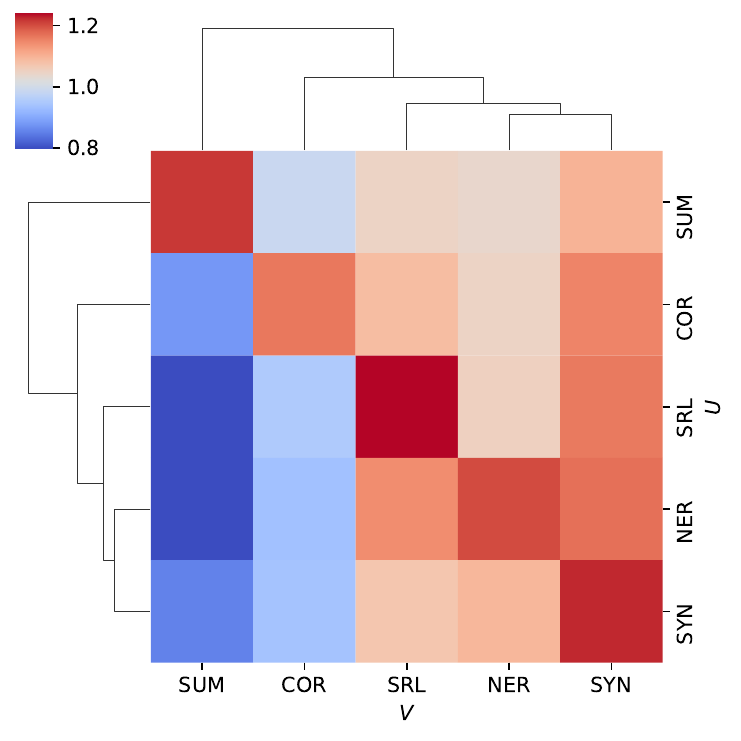}
        \caption{Llama 3 Base}
    \end{subfigure}
    \begin{subfigure}{0.45\textwidth}
        \centering
        \includegraphics[width=\textwidth]{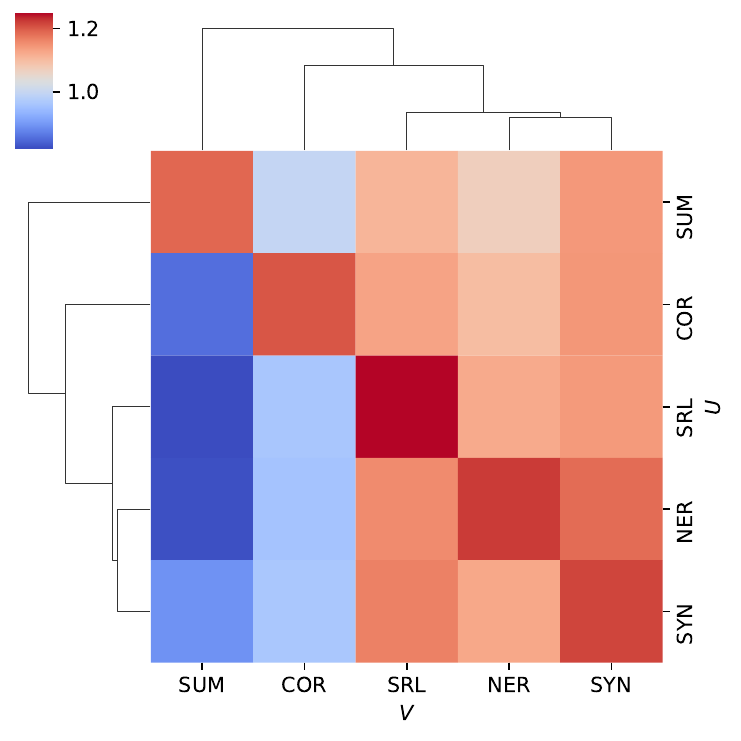}
        \caption{Llama 3 Instruct}
    \end{subfigure}
    \caption{Information sufficiency averaged between layers 10 and 15.}
    \label{fig:info-suff-10-15}
\end{figure}

\begin{figure}
    \centering
    \begin{subfigure}{0.45\textwidth}
        \centering
        \includegraphics[width=.3\textwidth]{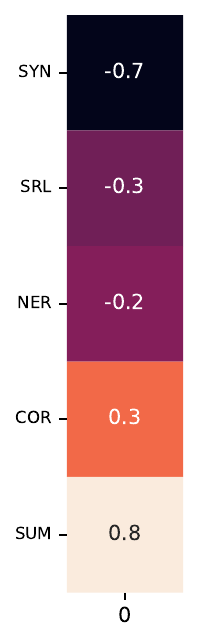}
        \caption{Mistral Base}
    \end{subfigure}
    \begin{subfigure}{0.45\textwidth}
        \centering
        \includegraphics[width=.3\textwidth]{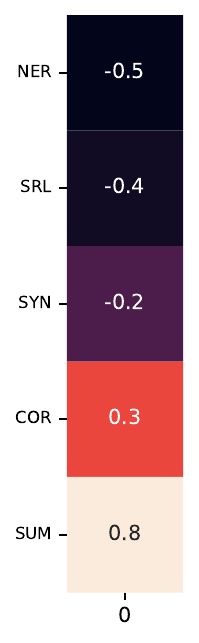}
        \caption{Mistral Instruct}
    \end{subfigure}
    \begin{subfigure}{0.45\textwidth}
        \centering
        \includegraphics[width=.3\textwidth]{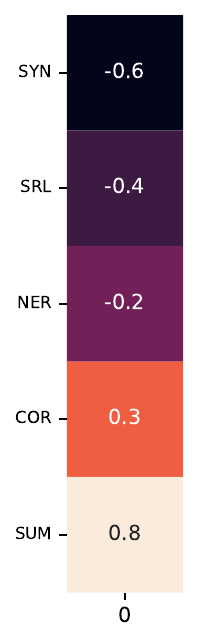}
        \caption{Llama 3 Base}
    \end{subfigure}
    \begin{subfigure}{0.45\textwidth}
        \centering
        \includegraphics[width=.3\textwidth]{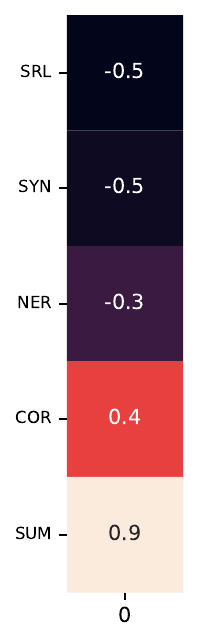}
        \caption{Llama 3 Instruct}
    \end{subfigure}
    \caption{Information sufficiency averaged between layers 10 and 15.}
    \label{fig:pp-detail}
\end{figure}

\subsection{Correlation with the naive approach}

One of the most intuitive way to measure tasks' inclusion is to train a model on a task $U$ and evaluate its performances on a task $V$. Considering we are using generative language models in this study, this can easily be done by a simple change of prompt. Thus we can measure the performance of a model trained on $U$ on the task $V$, using classical metric such as BERTScore~\cite{bert-score} or ROUGE~\cite{lin-2004-rouge}. This new evaluation of the inclusion gives new matrices such as the one presented on~\autoref{fig:info-suff-10-15}, except that these new matrices present the metric performances of a cross task evaluation. We present on~\autoref{tab:cross-task-vs-mi} correlations in terms of Kendall-$\tau$~\cite{kendall1938new} coefficient between IS matrices and cross task performances for ROUGE and BERTScore. Low correlation results suggest that there is no clear correlations between these two approaches. The main explanation being the alignment problem between tasks, considering that the output format for each task is different.

\begin{table}
\centering
\begin{tabular}{lcc}
\toprule
     & {\bf BERT} & {\bf ROUGE} \\
\midrule
Llama 3 base & 0.02 & 0.43 \\
Llama 3 instruct & -0.03 & 0.37 \\
\hline
Mistral base & 0.23 & 0.43 \\
Mistral instruct & 0.05 & 0.29 \\
\bottomrule
\end{tabular}
\caption{Kendall-$\tau$ between information sufficiency and naive cross evaluation set-up.}\label{tab:cross-task-vs-mi}
\end{table}

\section{Layer ablation study}
\label{app:layer-ablation}

In our experiments, we proposed a method to select the most relevant layers on which to calculate IS. We propose here an ablation study on the layer selection. We propose here several variations of~\autoref{tab:ranking}, based on different layer selections, to understand which layer is interesting to discover the NLP pipeline. Our first analysis, is based on layers from 10 to 15 based on~\autoref{fig:info-suff-10-15}. Our main argument being that deepest layers (after 15) are not relevant to understand the behavior of the task with respect to the model. First, on~\autoref{tab:ranking-all-layers} we propose the same analysis, based on all the layers. We can observe that in that case, the NLP pipeline is not fully respected with a change of position of NER and SRL. In order to better understand the layers that can actually interfere with pipeline discovery, we are carrying out two new rankings. The first is based on layers 10 to 33 (preservation of deep layers) and is reported on~\autoref{tab:ranking} on which we can once again see the perturbation between SRL and NER. Then we perform the same ranking with layers between 1 and 20 (focus on lower layers) which we report on~\autoref{tab:ranking-1-20}. On this last ranking we can see that the same pipeline as the one presented in the main study is respected.
We can see that deepest layers can effectively introduce noise to discover relationship between tasks, which is once again in line with known works. In order to better visualize this behavior, we report on~\autoref{fig:is-layer-evol}, the evolution of the values of the IS for our different task combinations. We observe interesting behaviors mainly between layers 10 and 15, with a spike in IS values, suggesting that IS finds relationships between tasks at this level, while in the deeper layers, values are lower, which can once again be explained by differences in output formats. We can observe high values of IS in the very early layers, which can largely be explained that lower layers will mostly remain unchanged during fine-tuning due to gradient vanishing problems.

Generally speaking, we can see that changes in the layers only affect the position of two tasks (NER and SRL) while the other ones remain essentially untouched. What's more, we can see from this analysis that in these different set-ups, the Base models deliver the same information overall, while the instructed models differ in the given rankings, as highlighted in the body of the study.

\begin{table}[t!]
    \centering
    {\small 
        \begin{tabular}{l|c|c@{~}cc@{~}c}
            \toprule
                  &       & \multicolumn{2}{c}{\bf Llama 3} & \multicolumn{2}{c}{\bf Mistral} \\
                  & {\bf Avg.} & {\bf Base} & {\bf Instruct} & {\bf Base} & {\bf Instruct} \\
            \midrule
            SUM & 4.0 & 4 & 4 & 4 & 4 \\
            COR & 3.0 & 3 & 3 & 3 & 3 \\
            {\bf SRL} & 1.25 & 2 & 0 & 2 & 1 \\
            {\bf NER} & 1.0 & 1 & 2 & 1 & 0 \\
            SYN & 0.75 & 0 & 1 & 0 & 2 \\
            \bottomrule
        \end{tabular}
    }
    \caption{Ranking on all layers}
    \label{tab:ranking-all-layers}
\end{table}

\begin{table}[t!]
    \centering
    {\small 
    \begin{tabular}{l|c|c@{~}cc@{~}c}
        \toprule
              &       & \multicolumn{2}{c}{\bf Llama 3} & \multicolumn{2}{c}{\bf Mistral} \\
              & {\bf Avg.} & {\bf Base} & {\bf Instruct} & {\bf Base} & {\bf Instruct} \\
        \midrule
        SUM & 4.0 & 4 & 4 & 4 & 4 \\
        COR & 3.0 & 3 & 3 & 3 & 3 \\
        {\bf SRL} & 1.25 & 2 & 0 & 2 & 1 \\
        {\bf NER} & 1.0 & 1 & 2 & 1 & 0 \\
        SYN & 0.75 & 0 & 1 & 0 & 2 \\
        \bottomrule
    \end{tabular}
    }
    \caption{Ranking on layer 10 to 33}
    \label{tab:ranking-10-33}
\end{table}

\begin{table}[t!]
    \centering
    {\small 
        \begin{tabular}{l|c|c@{~}cc@{~}c}
            \toprule
                  &       & \multicolumn{2}{c}{\bf Llama 3} & \multicolumn{2}{c}{\bf Mistral} \\
                  & {\bf Avg.} & {\bf Base} & {\bf Instruct} & {\bf Base} & {\bf Instruct} \\
            \midrule
            SUM & 4.0 & 4 & 4 & 4 & 4 \\
            COR & 3.0 & 3 & 3 & 3 & 3 \\
            {\bf NER} & 1.75 & 2 & 2 & 2 & 1 \\
            {\bf SRL} & 0.75 & 1 & 1 & 1 & 0 \\
            SYN & 0.5 & 0 & 0 & 0 & 2 \\
            \bottomrule
        \end{tabular}
    }
    \caption{Ranking on layer 1 to 20}
    \label{tab:ranking-1-20}
\end{table}

\begin{figure}
    \centering
    \begin{subfigure}{0.45\textwidth}
        \centering
        \includegraphics[width=\textwidth]{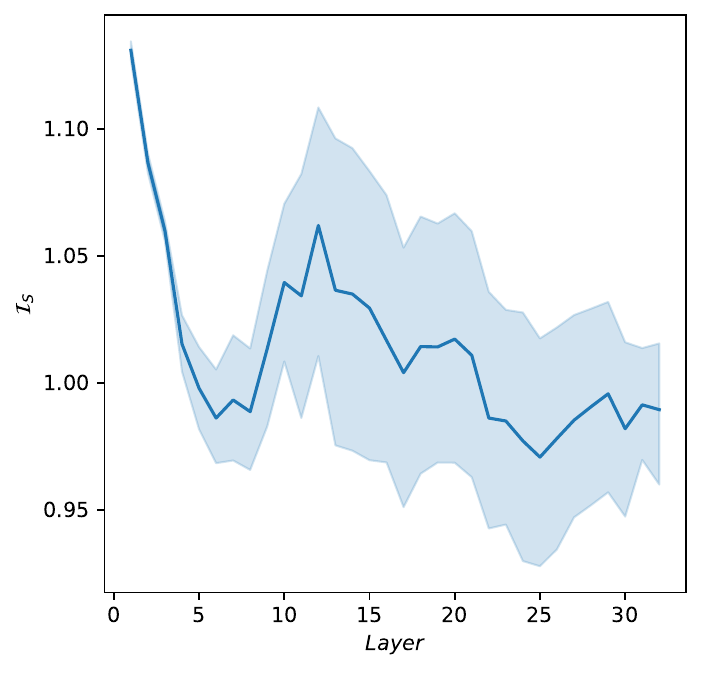}
        \caption{Mistral Base}
    \end{subfigure}
    \begin{subfigure}{0.45\textwidth}
        \centering
        \includegraphics[width=\textwidth]{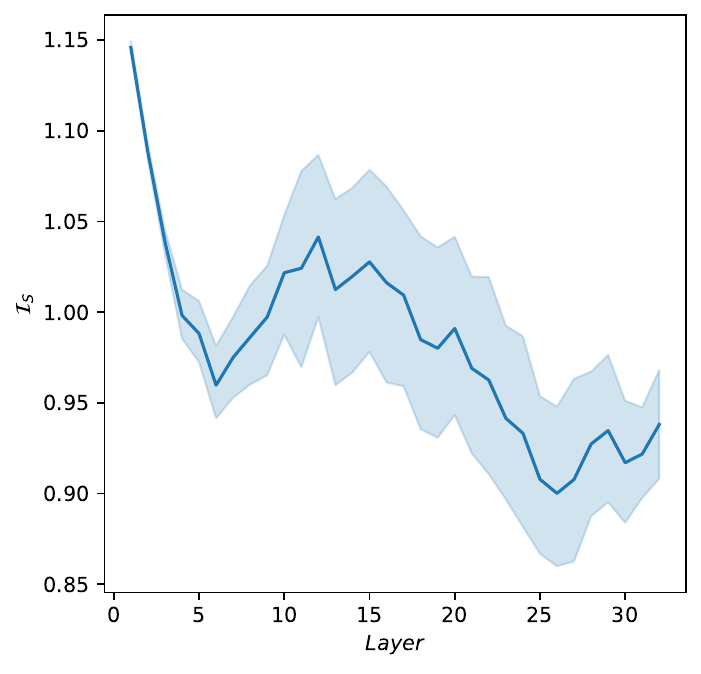}
        \caption{Mistral Instruct}
    \end{subfigure}
    \begin{subfigure}{0.45\textwidth}
        \centering
        \includegraphics[width=\textwidth]{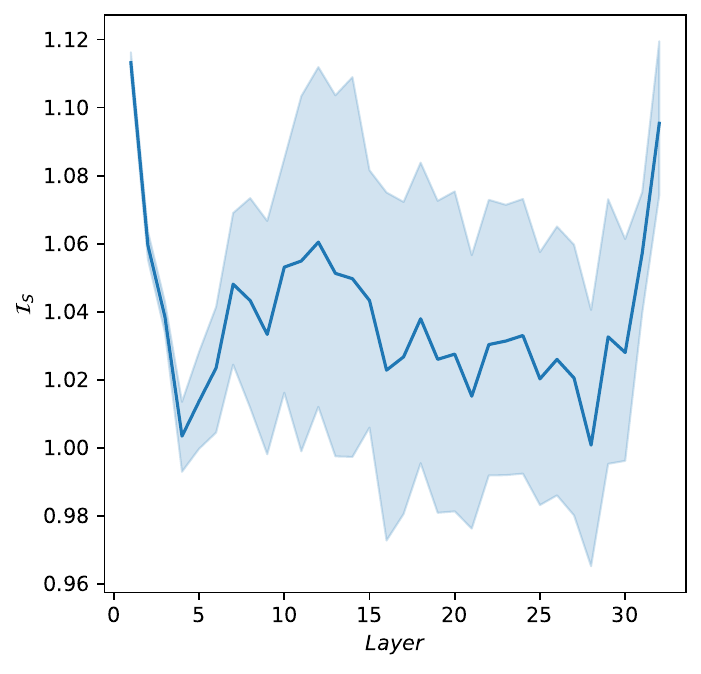}
        \caption{Llama Base}
    \end{subfigure}
    \begin{subfigure}{0.45\textwidth}
        \centering
        \includegraphics[width=\textwidth]{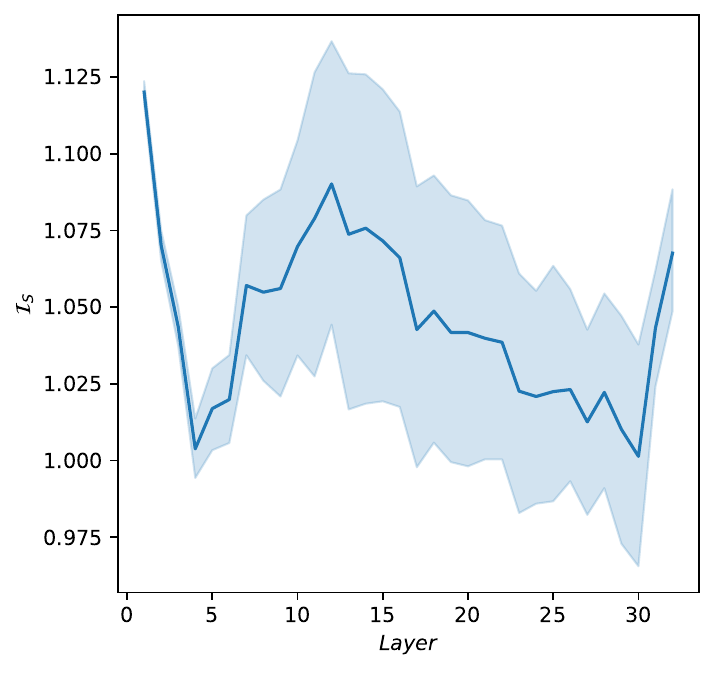}
        \caption{Llama Instruct}
    \end{subfigure}
    \caption{Information sufficiency evolution across layers.}
    \label{fig:is-layer-evol}
\end{figure}

\begin{figure}
    \centering
    \begin{subfigure}{0.45\textwidth}
        \centering
        \includegraphics[width=\linewidth]{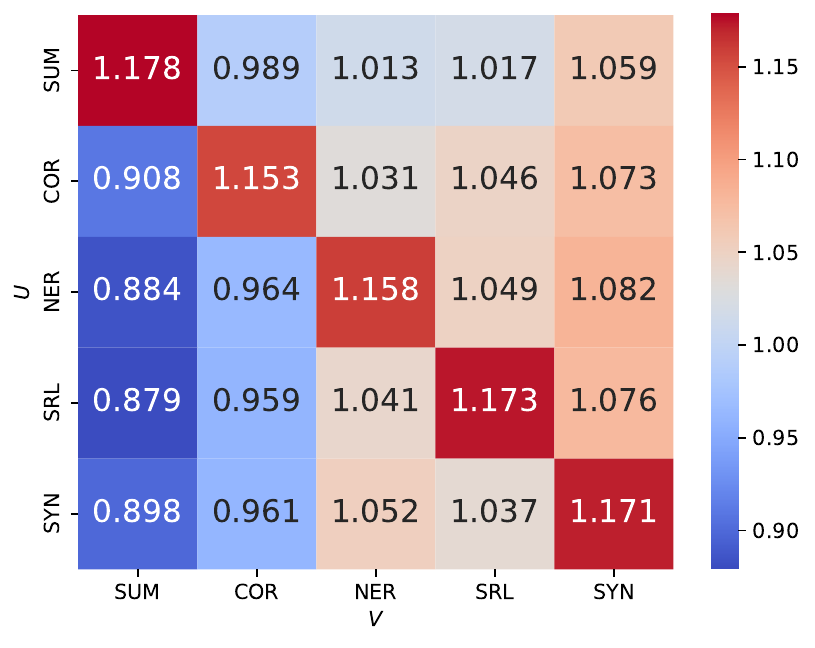}
        \caption{Base Only \label{fig:subfig:base}}
    \end{subfigure}
    \begin{subfigure}{0.45\textwidth}
        \centering
        \includegraphics[width=\linewidth]{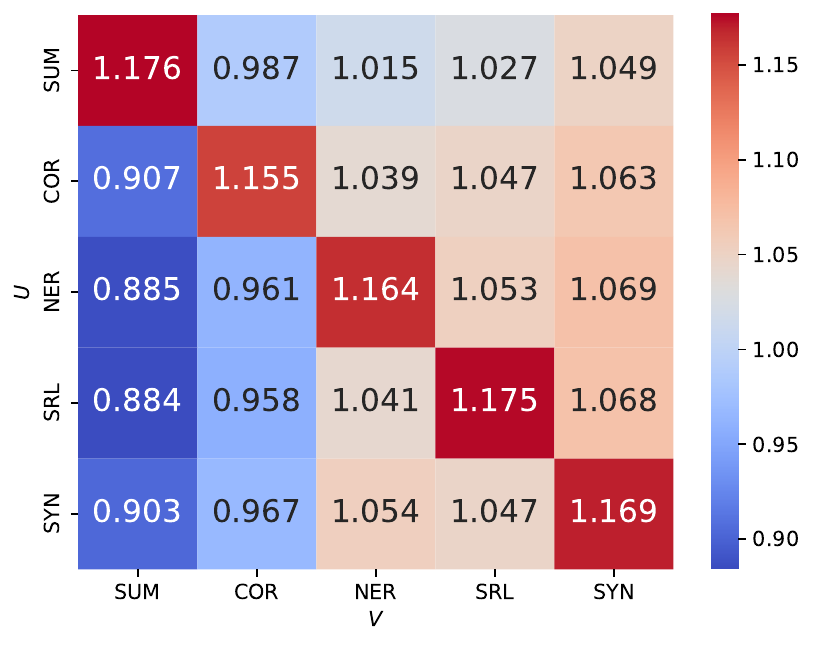}
        \caption{Instruct models \label{fig:subfig:instruct}}
    \end{subfigure}
    \caption{IS heat, for layers ranging from 1 to 20 with an average over only Base Models (\autoref{fig:subfig:base}), or on only Instruct models (\autoref{fig:subfig:instruct}).}
\end{figure}

\section{Task vector approach}
\label{app:task-vectors}

Task vectors~\cite{ilharcoEditingModelsTask2023} are objects of growing interest in the community. They were firstly defined in the case of transfer learning, where a pre-trained model is fine-tuned on different downstream tasks. In that case, the task vector was defined as follows:
\begin{defn}
    Let $W_0 \in \mathbb{R}^{m\times n}$, be the weights of a pre-trained model\footnote{The weights of a model can always be represented as a matrix or a vector. For some connection with current results, we chose the matrix representation.}, and let $W_U$ be the weights of the same model, but after fine-tuning it on a task $U\equiv \Prob_{XY_U}$. The task vector of task $U$ is given by:
    \begin{equation}
        \tau_U = W_U - W_0.
    \end{equation}
   \label{def:task-vector}
\end{defn}
These objects were used to combine properties of a model through arithmetic operations~\cite{ortiz-jimenezTaskArithmeticTangent, jin-arxiv-24, zhang-neurips-23}, or to remove certain components such as toxicity, personal information or bias from a language model~\cite{gaoEthosRectifyingLanguage2024,zhang-neurips-23,liu-arxiv-24}. In this study we used Low Rank Adaptation (\lora~\cite{lora}) for our fine-tunings on the different tasks. In this special case we have, 
\[
    W_U = W_0 + B_UA_U \quad \text{with} \quad B_U\in\mathbb{R}^{m\times r},~A_U\in\mathbb{R}^{r \times n},
\]
where $r \in \mathbb{N}$ is the chosen rank (which is 8 in this study). Given this definition of \lora, we have,
\[
    \tau_U = B_UA_U.
\]
In this special case, comparing task vectors, is equivalent as comparing different product $B_UA_U$. We propose here several distances in order to compare task vectors.

\paragraph{Cosine distance.} One of the main distances to compare semantically two vectorial objects is the cosine similarity. We propose here to simply define this distance as following,
\[
    \dist_{\cos}\left(\tau_U, \tau_V\right) \triangleq 1-\cos\left(\textrm{flatten}(B_UA_U), \textrm{flatten}(B_VA_V)\right).
\]

\paragraph{$L_2$ distance.} A standard way to compare vectorial representations is through euclidean distances, which we define as following, 
\[
    \dist_{L_2}(\tau_U, \tau_V) \triangleq \| \textrm{flatten}(B_UA_U) - \textrm{flatten}(B_VA_V) \|_2.
\]
Due to the dimension of task vectors euclidean distances are really restrictive.

\paragraph{Grassmann distance.} Another way to compare task vectors is to see them as vector spaces. In fact when using \lora, task vectors are defined through matrices which define vector spaces (column space). Grassmann distance is a mathematically well defined distance between vector spaces. It is based on the notion of principal angles between vector spaces~\cite{afriat-mpcps-57,miao-algebra-92}. If $W_U$ and $W_U$ are two matrices of rank $r$ whose columns are orthonormal\footnote{if they are not, a simple singular value decomposition of the matrices can allow us to do so}, then we can consider $\sigma$, the set of eigenvalues of $W_U^TW_V$. A theoretical result given by~\citet{bjorck-mc-73} is that these eigenvalues are the cosines of the principal angles between the image spaces of $W_1$ and $W_2$ {\i.e.} if we set $\theta$ to be the set of principal angles between $\imSpace(W_U)$ and $\imSpace(W_V)$, then $\cos(\theta)=\sigma$. Based on this information, we then have~: 
\begin{equation}
  \dist_G(W_U, W_V) = \sqrt{\sum_{i=1}^r\left(\theta_i\right)^2} \leq \sqrt{r}\frac{\pi}{2}.
  \label{eq:grassmann:distance}
\end{equation}
An additional remark on this Grassmann distance is that if we consider $(W_U, W_V) \in \mathbb{R}^{d\times d}$, two matrices of maximum rank $d$, then $\imSpace(W_U) = \imSpace(W_V) = \mathbb{R}^d$ implying that $\dist_G(W_1, W_2)=0$. This distance is therefore only of interest for matrices that are not of maximal rank, making it particularly interesting in the context of \lora. Based on this, in our context, the Grassmann distance between task vectors will only be,
\[
    \dist_G(\tau_U, \tau_V) \triangleq \dist_G(B_UA_U, B_VA_V).
\]
Grassmann distance has already been used in~\cite[Appendix G.]{lora} in the context of measuring the covering between different \lora\ adaptations. In our context, the covering is between different task vectors, and thus by extrapolation, between different tasks. An interesting property about Grassmann distance, is the following,
\begin{prop}[Grassmann distance is $A$-invariant.]\label{prop:grassmann-distance-A-invariance}
    In the case of {\lora} of rank $r$, if $\rk(A_U) = \rk(A_V) = r$, then we have,
    \[
        \dist_G(B_UA_U, B_VA_V) = \dist_G(B_U,B_V).
    \]
\end{prop}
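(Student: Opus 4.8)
The plan is to exploit the fact that the Grassmann distance, as defined in~\autoref{eq:grassmann:distance}, is a function of the \emph{column spaces} of its matrix arguments only, combined with the elementary observation that right-multiplying $B_U$ by a full-row-rank matrix $A_U$ does not change its column space.

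First I would make precise that $\dist_G$ factors through image spaces. By construction, given two matrices one first orthonormalizes their columns (via an SVD, as the paper notes), and then $\dist_G$ is the $\ell_2$ norm of the vector of principal angles between the resulting subspaces. Orthonormalizing the columns of a matrix $W$ leaves $\imSpace(W)$ unchanged, and principal angles are defined between subspaces; hence $\dist_G(W,W')$ depends on $(W,W')$ only through $(\imSpace(W),\imSpace(W'))$. In particular, if $\imSpace(W_1)=\imSpace(W_1')$ and $\imSpace(W_2)=\imSpace(W_2')$, then $\dist_G(W_1,W_2)=\dist_G(W_1',W_2')$.

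Second I would show $\imSpace(B_U A_U)=\imSpace(B_U)$ under the hypothesis $\rk(A_U)=r$. Viewing $A_U\in\mathbb{R}^{r\times n}$ as a linear map $\mathbb{R}^n\to\mathbb{R}^r$, the rank hypothesis states that this map is surjective, i.e. $A_U\mathbb{R}^n=\mathbb{R}^r$. Therefore $\imSpace(B_U A_U)=B_U(A_U\mathbb{R}^n)=B_U\mathbb{R}^r=\imSpace(B_U)$. The identical argument using $\rk(A_V)=r$ gives $\imSpace(B_V A_V)=\imSpace(B_V)$. Combining with the first step yields $\dist_G(B_U A_U,B_V A_V)=\dist_G(B_U,B_V)$, which is the claim.

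I do not expect a genuine obstacle here; the only point needing a little care is that the statement of $\dist_G$ in~\autoref{eq:grassmann:distance} presupposes rank-$r$ matrices with orthonormal columns. One should note (a) that the identity $\imSpace(B_U A_U)=\imSpace(B_U)$ holds regardless of whether $B_U$ itself has rank $r$, so no extra hypothesis on $B_U,B_V$ is required, and (b) that the orthonormalization step built into the definition is precisely what reconciles $B_U A_U$ with a clean orthonormal representative of $\imSpace(B_U)$, so that the asserted equality is really an equality of Grassmann distances between the subspaces $\imSpace(B_U)$ and $\imSpace(B_V)$, computed from two different matrix representatives of each.
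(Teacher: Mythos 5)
Your proof is correct and takes essentially the same approach the paper gestures at: the paper's one-line proof simply cites ``the rank Theorem,'' which is precisely your observation that a full-row-rank $A_U$ is surjective so $\imSpace(B_U A_U)=\imSpace(B_U)$, and that $\dist_G$ depends only on column spaces. You have merely spelled out the steps the paper leaves implicit.
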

\begin{proof}
    The proof is direct by using the rank Theorem.
\end{proof}
\begin{rem}
    \autoref{prop:grassmann-distance-A-invariance} requires the following condition,
    \[
        \rk(A_U) = \rk(A_V) = r.
    \]
    However, as stated in~\cite{malekmohammadi2024implicit}, when using {\lora}, $A$ matrices remain essentially unchanged, and because they are initialized randomly by using a Gaussian distributions, the probability of having full rank $A$ matrices is $1$. Moreover in our experiments, we checked this property empirically, allowing thus to use~\autoref{prop:grassmann-distance-A-invariance}.
\end{rem}

\begin{rem}
    Every results presented here is to be interpreted as distances (the higher the less similar).
\end{rem}

\subsection{Link with Information sufficiency.}

Language models we are using are essentially continuous and differentiable applications with respect to its parameters. Thus a small change in the parameters of the models will induce a small change in the outputs of the models (this is direct application of the definition of continuity). Implying thus that closeness in the task vectors will induce little variation in the activation space, the inverse being not necessary true. This implies that a small distance between task vectors, will most likely provide high information sufficiency. However a high information sufficiency can be discovered between task vectors that are far from each other.

\subsection{Result analysis}

In the context of our models, we decided to apply \lora\ on query and value projections on the different layers of the models. Thus for every tasks we defined and for every models, we have,
\[  
    \begin{split}
    & \tau^{Q}_t \triangleq (B^{Q,l}_t, A^{Q,l}_t) \quad \text{For the Query projection bloc at layer $l$}\\
    & \tau_t^{V} \triangleq (B^{V,l}_t, A^{V,l}_t) \quad \text{For the Value projection bloc at layer $l$}
    \end{split}
\]
Since in the literature, it is a well known fact that Query and Value projection encode different information, we decided to separate the analysis between Queries and Values, by looking at the average distance across layer for each module, {\it i.e.}
\begin{equation}
    \begin{split}
        & \dist\left(\tau^{\queryProj}_U, \tau^\queryProj_V\right) \triangleq \frac{1}{L} \sum_{l=1}^L \dist\left( B^{\queryProj,l}_UA^{\queryProj,l}_U, B^{\queryProj,l}_VA^{\queryProj,l}_V  \right)\quad \text{Query distance}\\
        & \dist\left(\tau^{\valueProj}_U, \tau^\valueProj_V\right) \triangleq \frac{1}{L} \sum_{l=1}^L \dist\left( B^{\valueProj,l}_UA^{\valueProj,l}_U, B^{\valueProj,l}_VA^{\valueProj,l}_V  \right)\quad \text{Value distance}
    \end{split}
\end{equation}
\autoref{fig:task-vector:heat-map:mistral-base} and~\autoref{fig:task-vector:heat-map:mistral-instruct} provide distance results for the Mistral model (respectively Base and Instruct). \autoref{fig:task-vector:heat-map:llama-base} and~\autoref{fig:task-vector:heat-map:llama-instruct} provide same results for the Llama 3 model. 
First we can see that results seem similar between task vectors on the Values and the Queries.
Then, in terms of Grassmann and Cosine distances, we have a closeness between SYN and SRL, which is in line with our initial results on IS. In the same way as for IS, we observe the specific character of the summarization task, with a great distance from the other tasks present.
The $L_2$ distance does not give interesting association between tasks (with respect to the underlying pipeline hypothesis), for the different studied models. This can easily be explained by the restrictive nature of the $L_2$ distance in such a high-dimensional space.

\subsection{Limitations}

One of the main limitation of this approach is its symmetric character. When assessing distances between task vectors, we rely on a symmetric approach which is not leading to interpretation of some partial ordering between tasks, which is our main goal in this study.

\begin{figure}
    \centering
    \begin{subfigure}{1\textwidth}
        \centering
        \includegraphics[width=\textwidth]{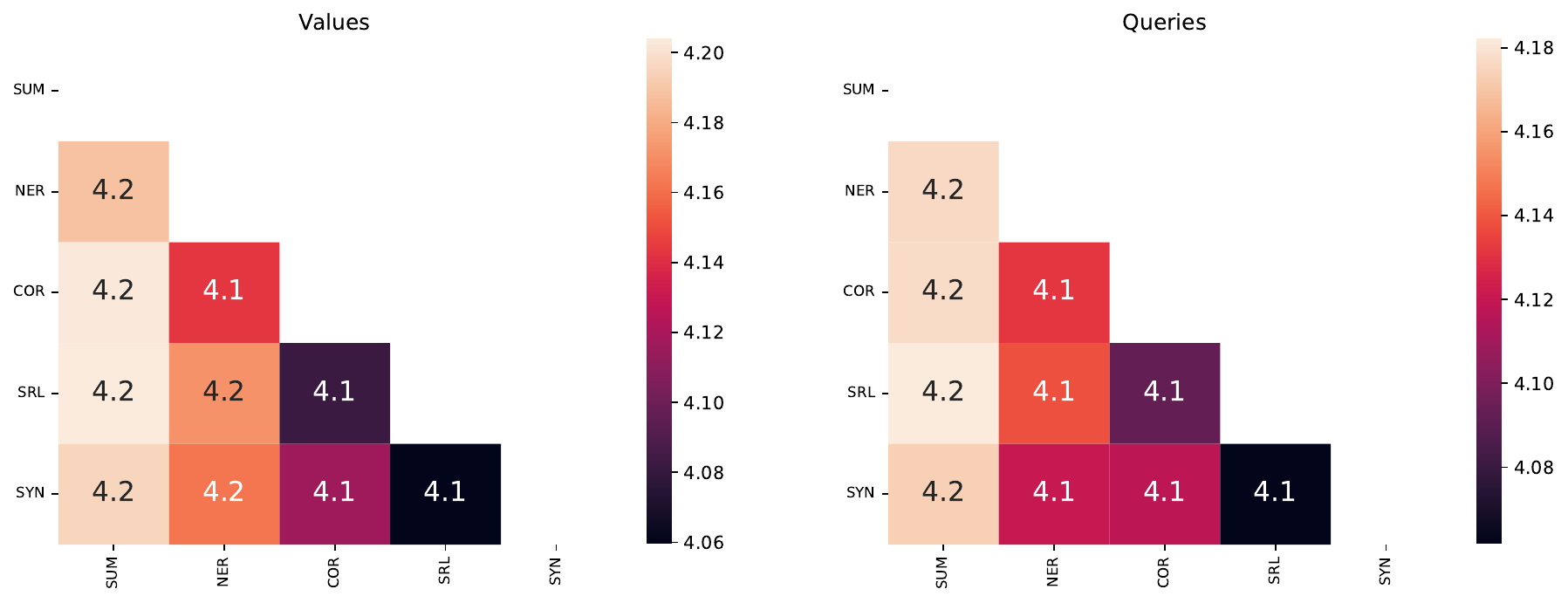}
        \caption{Grassmann Mistral Base}
    \end{subfigure}
    \begin{subfigure}{1\textwidth}
        \centering
        \includegraphics[width=\textwidth]{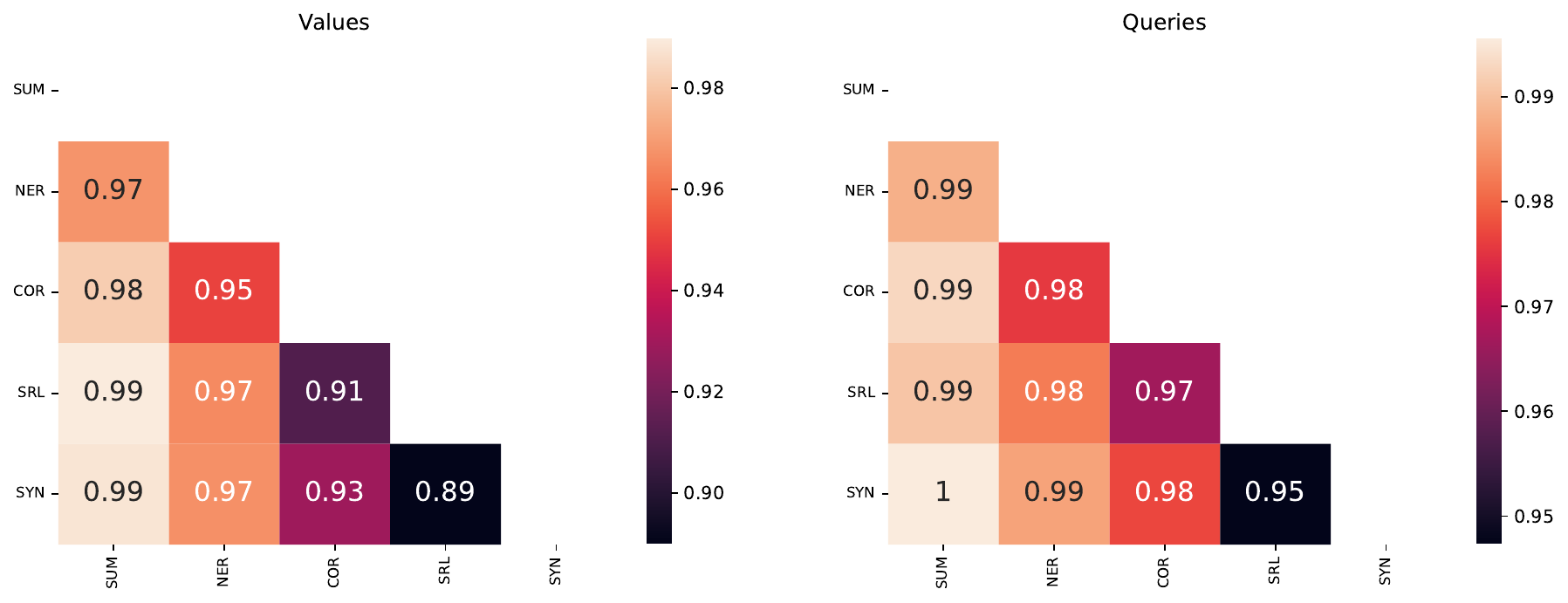}
        \caption{Cosine Mistral Base}
    \end{subfigure}
    \begin{subfigure}{1\textwidth}
        \centering
        \includegraphics[width=\textwidth]{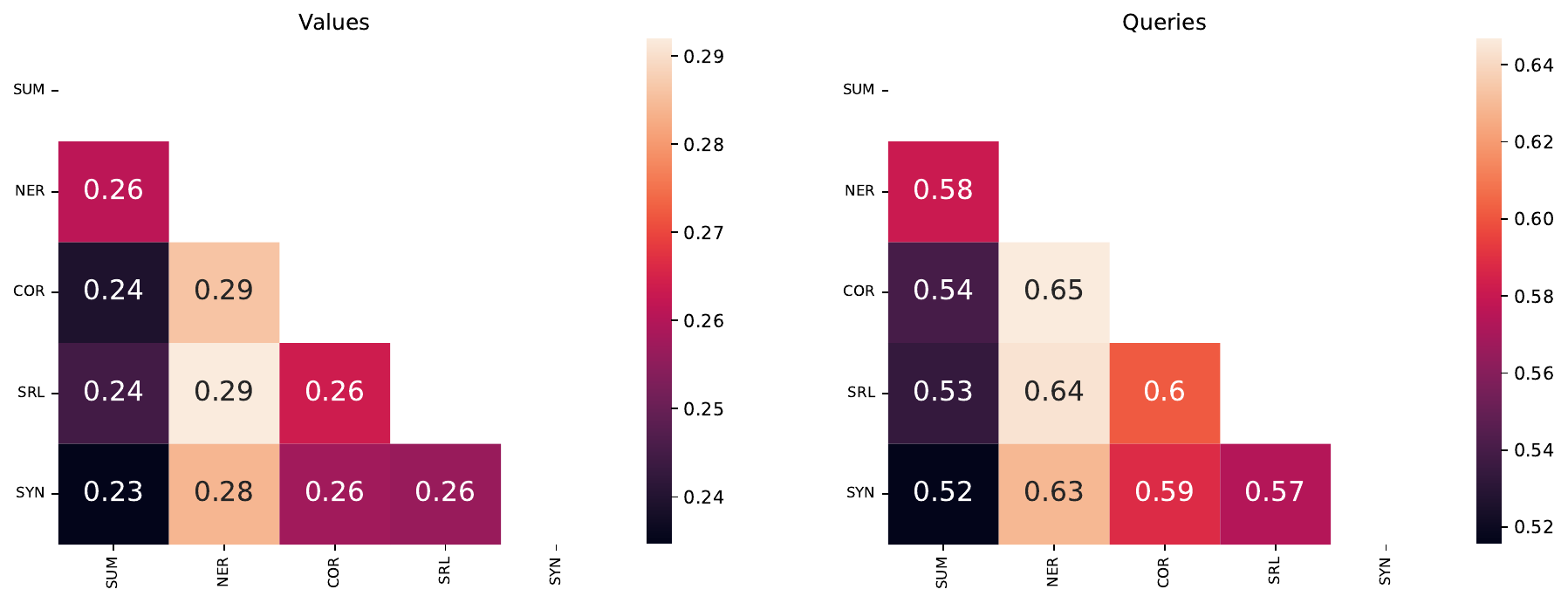}
        \caption{$L_2$ Mistral Base}
    \end{subfigure}
    \caption{Mistral Base distance heat maps}
    \label{fig:task-vector:heat-map:mistral-base}
\end{figure}

\begin{figure}
    \centering
    \begin{subfigure}{1\textwidth}
        \centering
        \includegraphics[width=\textwidth]{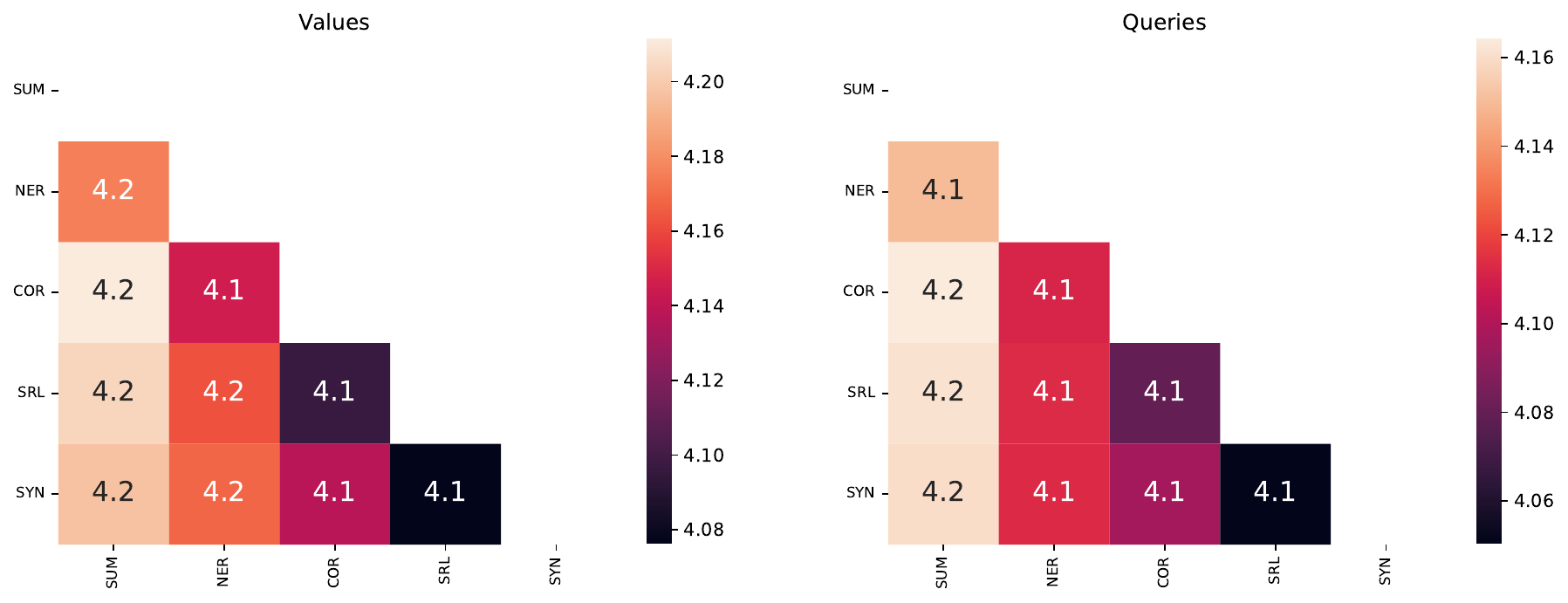}
        \caption{Grassmann Mistral Instruct}
    \end{subfigure}
    \begin{subfigure}{1\textwidth}
        \centering
        \includegraphics[width=\textwidth]{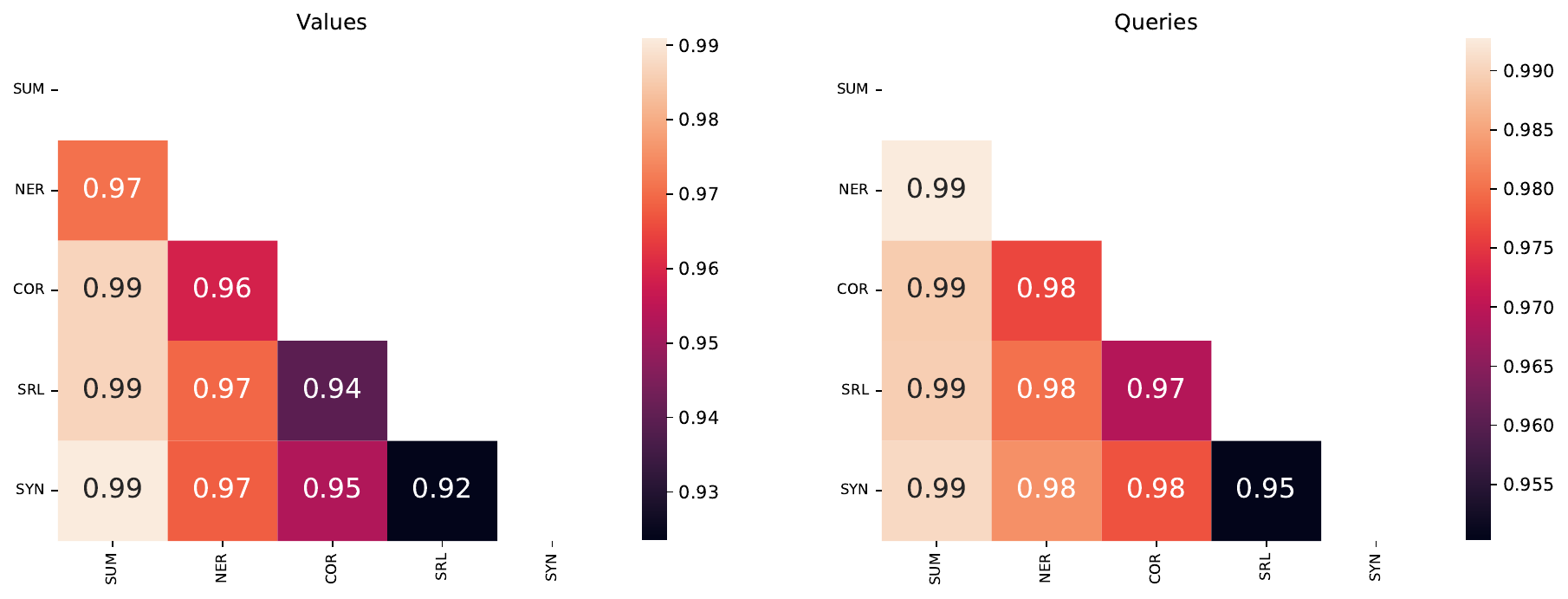}
        \caption{Cosine Mistral Instruct}
    \end{subfigure}
    \begin{subfigure}{1\textwidth}
        \centering
        \includegraphics[width=\textwidth]{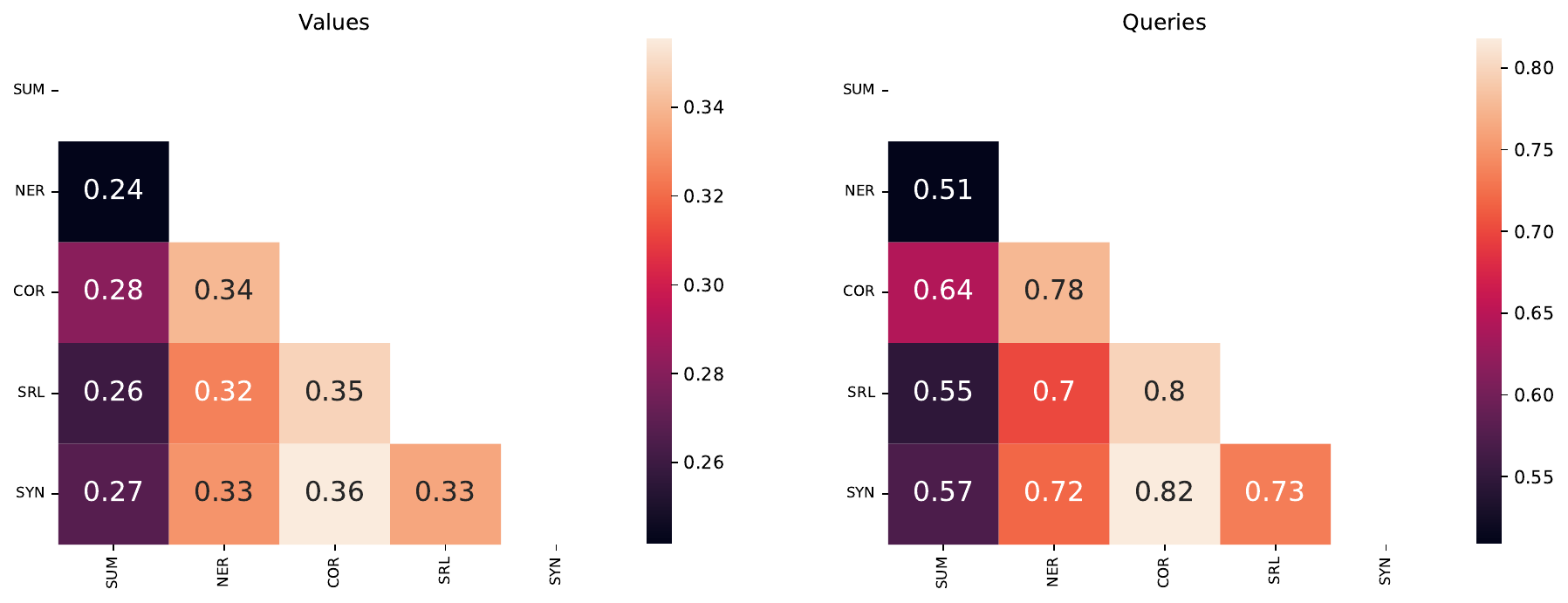}
        \caption{$L_2$ Mistral Instruct}
    \end{subfigure}
    \caption{Mistral Instruct distance heat maps}
    \label{fig:task-vector:heat-map:mistral-instruct}
\end{figure}


\begin{figure}
    \centering
    \begin{subfigure}{1\textwidth}
        \centering
        \includegraphics[width=\textwidth]{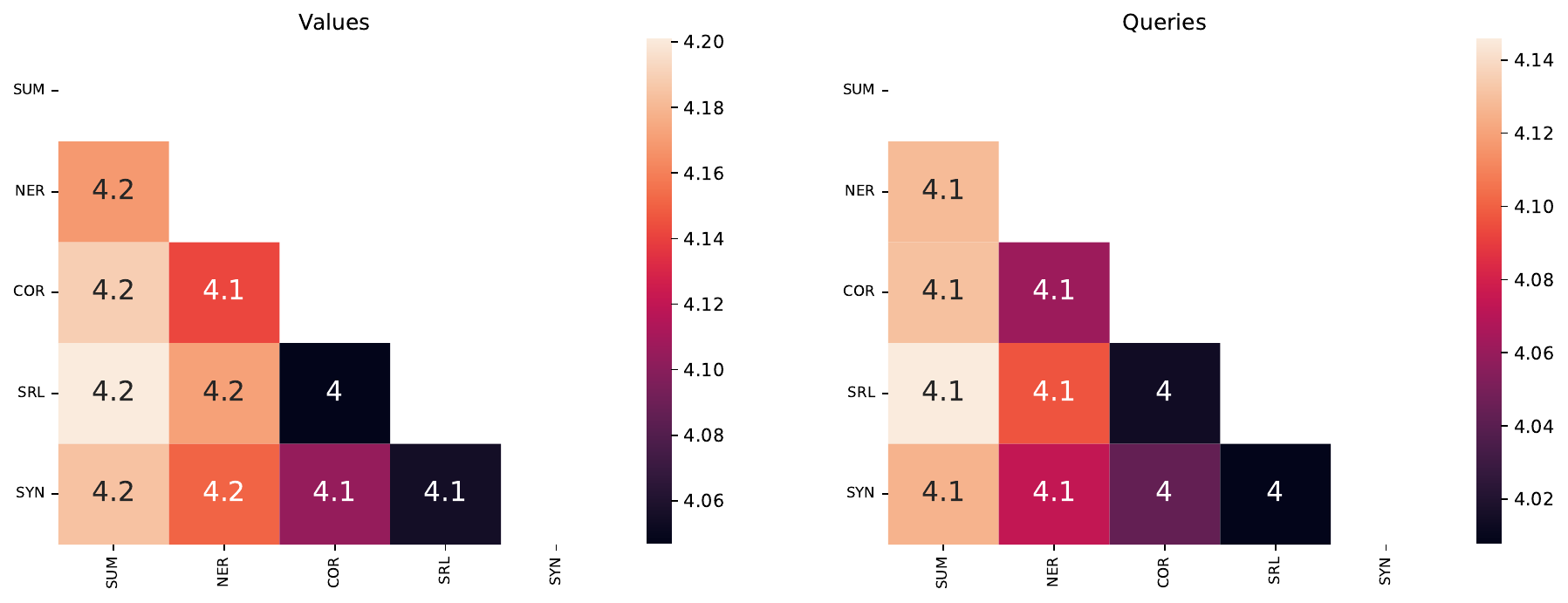}
        \caption{Grassmann Llama 3 Base}
    \end{subfigure}
    \begin{subfigure}{1\textwidth}
        \centering
        \includegraphics[width=\textwidth]{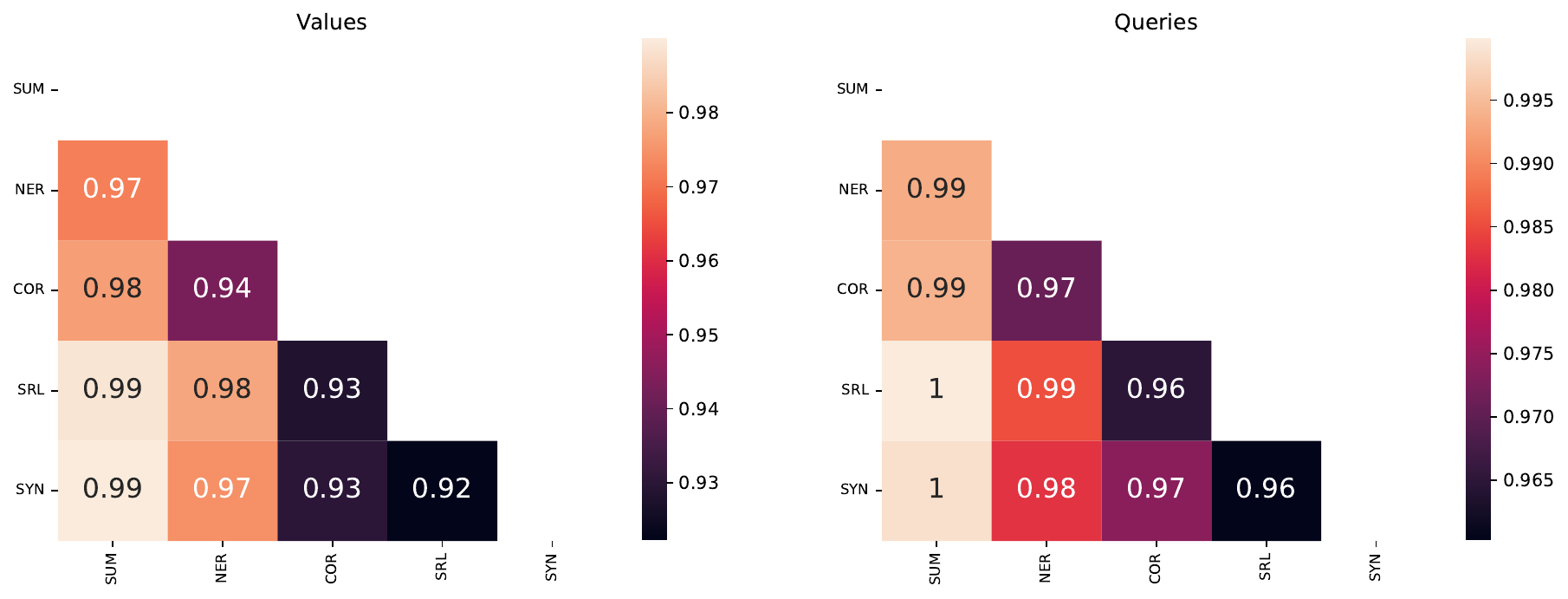}
        \caption{Cosine Llama 3 Base}
    \end{subfigure}
    \begin{subfigure}{1\textwidth}
        \centering
        \includegraphics[width=\textwidth]{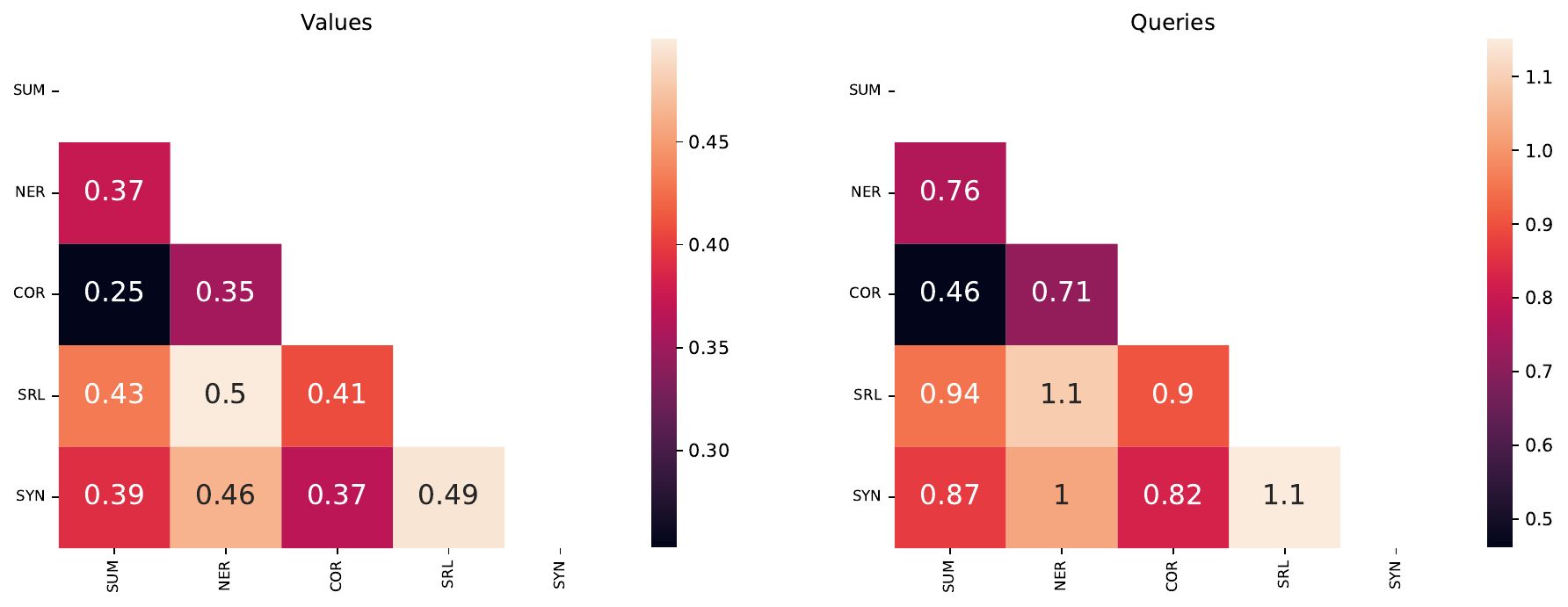}
        \caption{$L_2$ Llama 3 Base}
    \end{subfigure}
    \caption{Llama 3 Base distance heat maps}
    \label{fig:task-vector:heat-map:llama-base}
\end{figure}

\begin{figure}
    \centering
    \begin{subfigure}{1\textwidth}
        \centering
        \includegraphics[width=\textwidth]{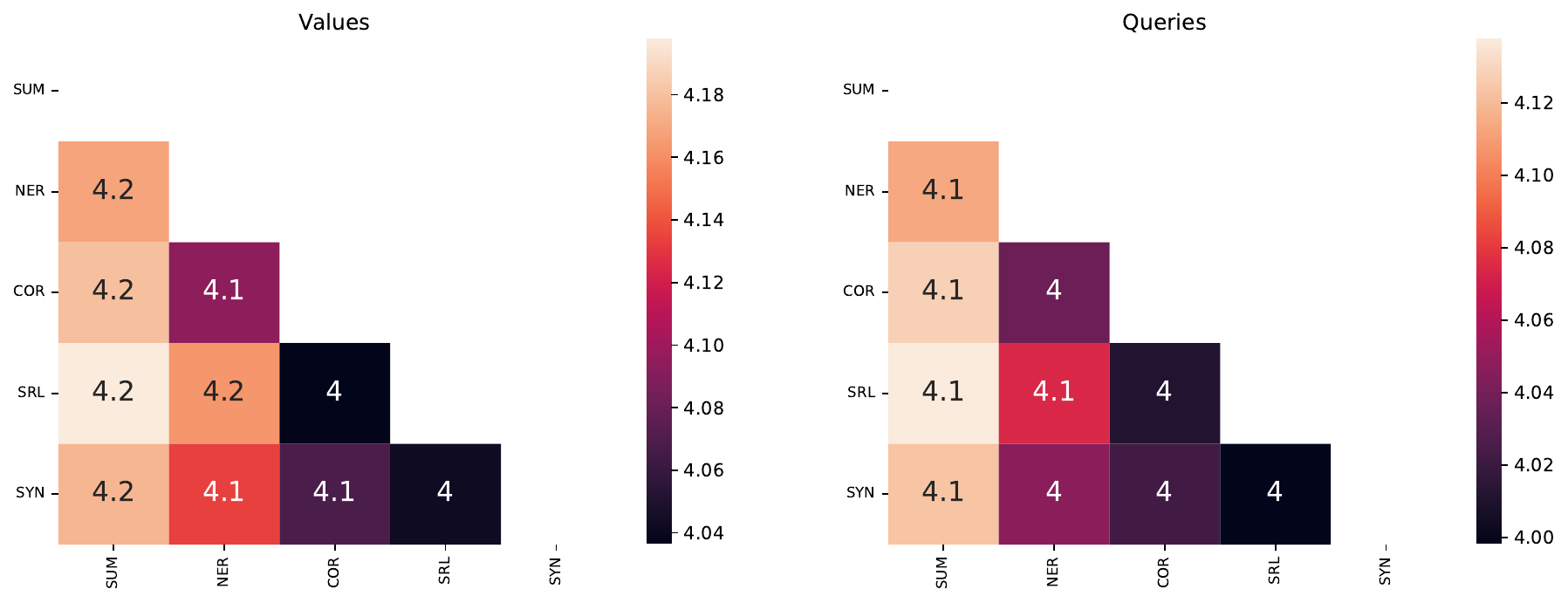}
        \caption{Grassmann Llama 3 Instruct}
    \end{subfigure}
    \begin{subfigure}{1\textwidth}
        \centering
        \includegraphics[width=\textwidth]{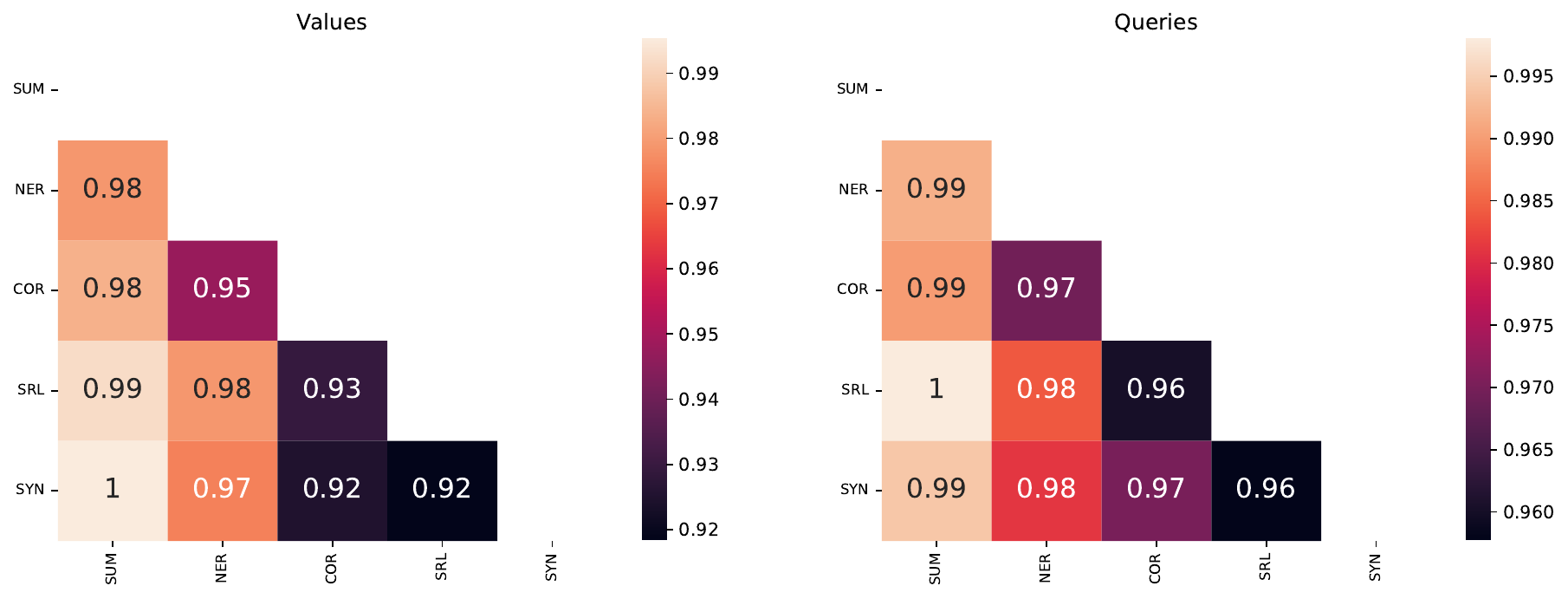}
        \caption{Cosine Llama 3 Instruct}
    \end{subfigure}
    \begin{subfigure}{1\textwidth}
        \centering
        \includegraphics[width=\textwidth]{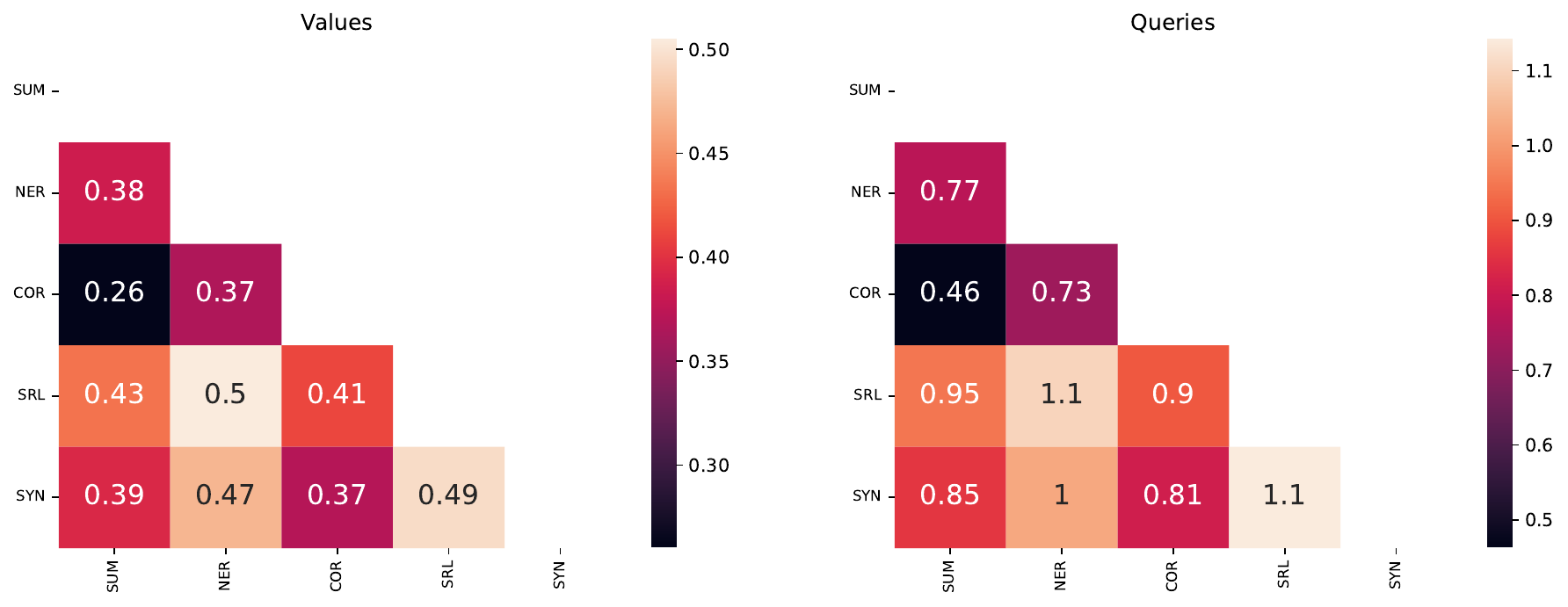}
        \caption{$L_2$ Llama 3 Instruct}
    \end{subfigure}
    \caption{Llama 3 Instruct distance heat maps}
    \label{fig:task-vector:heat-map:llama-instruct}
\end{figure}

\end{document}